\documentclass[11pt]{article}
\usepackage[utf8]{inputenc}

\usepackage[margin=1in]{geometry}
\usepackage[hidelinks]{hyperref}
\usepackage{booktabs}
\usepackage[T1]{fontenc}
\usepackage{etoolbox}
\patchcmd{\maketitle}{\@fnsymbol}{\@alph}{}{} 

\usepackage[
    backend=biber,
    style=numeric-comp,
    sortcites,
    sorting=none,
    giveninits=true,
    natbib,
    hyperref=true,
    maxbibnames=99,
    doi=false,isbn=false,url=false,eprint=false
]{biblatex}

\usepackage{placeins}
\usepackage{tikz}
\usepackage{subcaption}

\usepackage{parskip}

\usepackage{xargs}
\usepackage[colorinlistoftodos,prependcaption,textsize=tiny]{todonotes}
\newcommandx{\note}[2][1=]{\todo[linecolor=blue,backgroundcolor=blue!25,bordercolor=blue,#1]{#2}}

\usepackage{algorithm}
\usepackage{algorithmic}
\usepackage{multirow}

\usepackage{xcolor}
\usepackage{graphicx,amsfonts,amsthm,amssymb}
\usepackage{bm}
\usepackage{amsmath,mathrsfs}
\usepackage{mathtools}
\usepackage{cleveref}
\usepackage{stfloats}

\newtheorem{theorem}{Theorem}

\DeclarePairedDelimiter\abs{\lvert}{\rvert}
\DeclarePairedDelimiter\paren{(}{)}

\DeclarePairedDelimiter\cbrac{\{}{\}}

\usepackage{setspace}
\usepackage[colorinlistoftodos]{todonotes}

\title{ConformalHDC: Uncertainty-Aware Hyperdimensional Computing with Application to Neural Decoding}
\author{%
    Ziyi Liang\thanks{Department of Statistics, UC Irvine, Irvine, CA, USA} \quad
    Hamed Poursiami\thanks{Department of Electrical and Computer Engineering, George Mason University, Fairfax, VA, USA} \quad
    Zhishun Yang \footnotemark[1] \textsuperscript{,}\thanks{Center for the Neurobiology of Learning and Memory, UC Irvine, Irvine, CA, USA}  \\
    Keiland Cooper\footnotemark[3] \textsuperscript{,}\thanks{Department of Neurobiology and Behavior, UC Irvine, Irvine, CA, USA}\quad
    Akhilesh Jaiswal\thanks{Department of Electrical and Computer Engineering, University of Wisconsin Madison, Madison, WI, USA} \quad
    Maryam Parsa\footnotemark[2] \\
    Norbert Fortin\footnotemark[3] \textsuperscript{,}\footnotemark[4] \quad
    Babak Shahbaba\footnotemark[1]
}

\usepackage{enumitem}

\begin{document}

\maketitle

\setlist[itemize]{leftmargin=*}
\setlist[enumerate]{leftmargin=*}

\begin{abstract}

Hyperdimensional Computing (HDC) offers a computationally efficient paradigm for neuromorphic learning. Yet, it lacks rigorous uncertainty quantification, leading to open decision boundaries and, consequently, vulnerability to outliers, adversarial perturbations, and out-of-distribution inputs. To address these limitations, we introduce ConformalHDC, a unified framework that combines the statistical guarantees of conformal prediction with the computational efficiency of HDC. For this framework, we propose two complementary variations. First, the set-valued formulation provides finite-sample, distribution-free coverage guarantees. Using carefully designed conformity scores, it forms enclosed decision boundaries that improve robustness to non-conforming inputs. Second, the point-valued formulation leverages the same conformity scores to produce a single prediction when desired, potentially improving accuracy over traditional HDC by accounting for class interactions. We demonstrate the broad applicability of the proposed framework through evaluations on multiple real-world datasets. In particular, we apply our method to the challenging problem of decoding non-spatial stimulus information from the spiking activity of hippocampal neurons recorded as subjects performed a sequence memory task. Our results show that ConformalHDC not only accurately decodes the stimulus information represented in the neural activity data, but also provides rigorous uncertainty estimates and correctly abstains when presented with data from other behavioral states. Overall, these capabilities position the framework as a reliable, uncertainty-aware foundation for neuromorphic computing.

\end{abstract}

\section{Introduction}

Machine learning aims to build data-driven systems capable of making reliable predictions and decisions. Historically, the field has been shaped by two competing paradigms: symbolic AI~\citep{newell-1980-symbols} that uses logical rules and symbols to represent knowledge, making systems easier to understand but often less adaptable; and connectionist AI~\citep{Rumelhart1986LearningRB} that relies on interconnected networks of artificial neurons (e.g., deep neural networks) to learn from data, offering greater flexibility but with reduced transparency and interpretability. In recent years, \emph{Hyperdimensional Computing} (HDC)~\citep{Kanerva2009HyperdimensionalCA, lulu-2020-HDCreview} emerged as a brain-inspired paradigm, aiming to bridge the gap between connectionist and symbolic frameworks. HDC represents information in a structured, symbolic way while retaining the adaptability of the connectionist approaches.

HDC, also known as vector symbolic architectures (VSA), encodes data as binary or real-valued hypervectors with thousands of dimensions, a style consistent with population coding in the brain. Hypervectors are manipulated with operations that differ in their implementation but achieve a similar function to the corresponding operations in the brain~\citep{ni2022algorithm}: bundling (e.g., element-wise addition) to aggregate information, binding (e.g., XOR or circular convolution) to associate concepts, permutation to form spatial or temporal orderings, and similarity (e.g., Hamming or cosine) to retrieve relevant memories~\citep{heddes2024hyperdimensional}. As a result, HDC naturally supports associative memory and fast similarity search. This biological plausibility and interpretability make HDC particularly appealing to neuroscientists. Furthermore, its direct mapping to bitwise logic enables real-time, energy-efficient implementations prized by hardware designers \citep{kleyko2022vector}.


Despite its advantages, HDC has several limitations. First, it lacks rigorous uncertainty quantification, a challenge shared by many machine learning models. Second, its accuracy is often lower than that of traditional algorithms, partly due to simplified decision boundaries. Finally, HDC does not have a mechanism to abstain from predictions when encountering unseen labels. For instance, in face recognition, humans can identify unfamiliar faces as unknown, whereas HDC, and many other classifiers, tend to make a prediction regardless, committing to irresponsible and unreliable decisions on non-conforming inputs.

To address these issues, we introduce ConformalHDC, a novel and robust framework that efficiently combines Hyperdimensional Computing with conformal inference~\citep{vovk1999machine, romano2020classification}. Our contributions are as follows:
\begin{enumerate}
    \item We develop a principled statistical framework for uncertainty quantification in HDC using conformal inference, ensuring valid coverage in a distribution-free, finite-sample setting.
    \item We design novel conformity scores tailored for HDC that are both powerful and computationally efficient, with potential applicability to other similarity-based classifiers.
    \item Our framework enables decision deferral, allowing the model to abstain from predictions when encountering unseen labels or adversarial inputs. By deferring low-confidence cases to a human-in-the-loop, the framework avoids erroneous decisions in high-stakes scenarios.
\end{enumerate}

\subsection{Illustrative Example}
\Cref{fig:hdc-vs-discount} illustrates the intuition and key contributions of our method using a simple classification example. The left panel shows the decision boundary of a standard HDC classifier based on Euclidean distance, other similarity metrics are discussed in \Cref{app:positive-similarity}. The marks in the clusters represent class prototypes, also known as class hypervectors. New test points are first encoded into hypervectors, then assigned the label of the \textit{most similar} class hypervector. As shown, the HDC decision boundaries consist of linear segments. This simplified geometry often overlooks within-class variance and class interactions, which can lead to reduced predictive accuracy. Moreover, these open boundaries partition the entire space, forcing the model to assign a label to every test point, regardless of its distance from the prototypes. As a result, standard HDC methods cannot support decision abstention, since they lack an intrinsic mechanism to recognize points that are too dissimilar from any known class.

In contrast, the right panel of \Cref{fig:hdc-vs-discount} displays the enclosed decision boundaries by ConformalHDC. These boundaries support two variations of our framework: set-valued ConformalHDC and point-valued ConformalHDC. The primary distinction lies in how they handle regions where boundaries overlap. Set-valued ConformalHDC outputs a prediction set that includes the overlapping classes to reflect uncertainty, whereas point-valued ConformalHDC resolves overlaps into a single class assignment for scenarios in which a point prediction is preferred. Notably, both variations return an empty set for test points falling outside all class boundaries. This indicates potential adversarial attacks or out-of-distribution (OOD) labels, enabling expert intervention to ensure more reliable and explainable neuromorphic learning.

As demonstrated in \Cref{sec:synthetic-exp}, point-valued ConformalHDC can achieve higher accuracy than standard HDC by leveraging decision boundaries that account for class interactions. Meanwhile, set-valued ConformalHDC provides finite-sample coverage guarantees, ensuring the prediction set contains the ground-truth label with a probability of at least $1-\alpha$ for a user-specified significance level $\alpha > 0$.

\begin{figure}[h]
    \centering
    \includegraphics[width=0.87\textwidth]{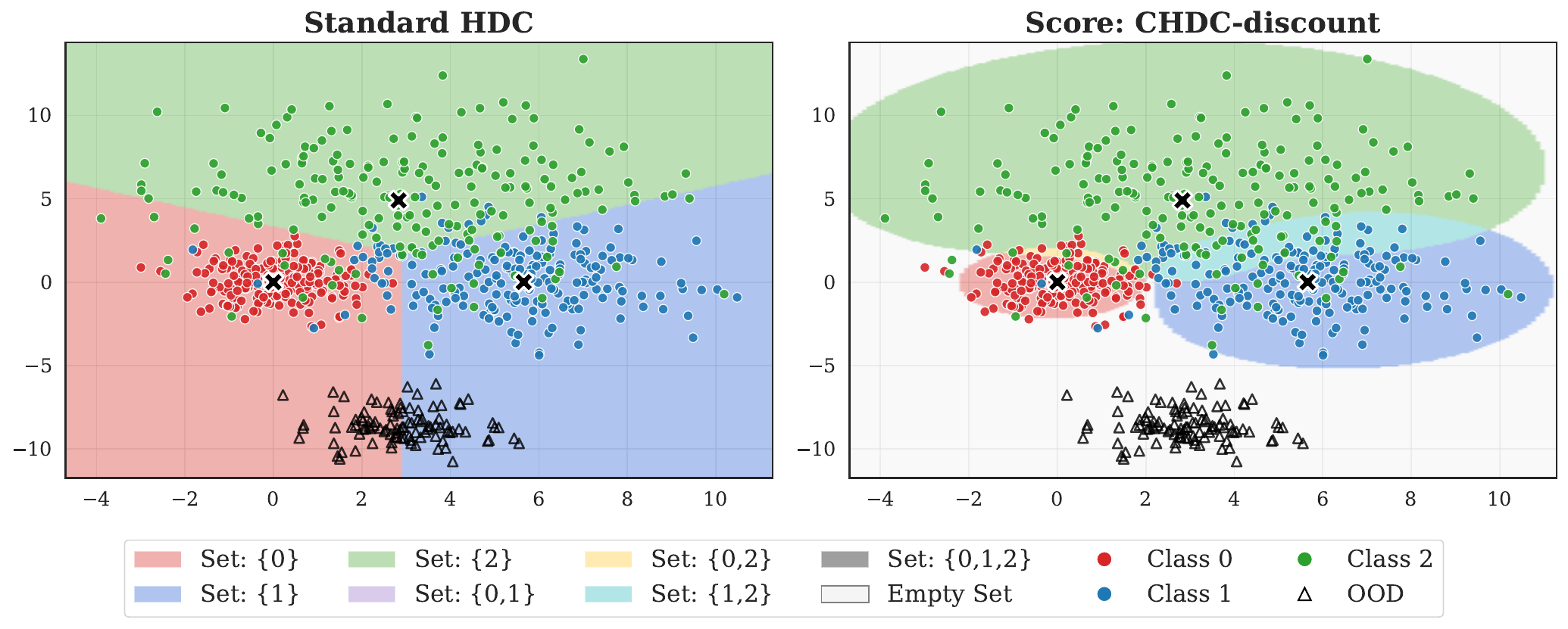} 
     
    \caption{\textbf{ConformalHDC enables rigorous uncertainty quantification and principled abstention through enclosed decision regions.} Unlike the open partitions of standard HDC, our framework produces bounded regions where overlaps explicitly characterize predictive uncertainty. Furthermore, the enclosed geometry allows the model to identify and abstain when encountering non-conforming inputs, such as the OOD samples denoted by triangles. Scatter points represent three classes, and colored regions indicate boundaries.}
    \label{fig:hdc-vs-discount}
\end{figure}

\subsection{Related Work}
Conformal inference~\citep{saunders1999transduction,vovk1999machine,vovk2005algorithmic} has emerged as a prolific research area providing distribution-free uncertainty quantification for machine learning models~\citep{lei2013distribution,lei2014distribution,lei2018distribution,barber2019predictive}. Many prior studies have developed conformal wrappers for pre-trained black-box models across outlier detection~\citep{smith2015conformal,guan2019prediction,Liang_2022_integrative_p_val,bates2021testing}, classification~\citep{vovk2003mondrian,hechtlinger2018cautious,romano2020classification,angelopoulos2020uncertainty,bates2021distributionfree}, and regression~\citep{vovk2005algorithmic,lei2014distribution,romano2019conformalized}.

In contrast to these post-hoc conformal wrappers that treat models as fixed black boxes, our approach integrates conformal inference directly into the internal decision boundaries of HDC models. Although recent works have incorporated conformal guarantees into the training process~\citep{colombo2020training,bellotti2021optimized,stutz2021learning,einbinder2022training}, they primarily target standard machine learning objectives such as model selection or loss regularization, leaving meaningful integration into neuromorphic learning largely unexplored. A recent attempt, HDUQ-HAR~\citep{Lamaakal2025HDUQHAR}, applies conformal prediction within the HDC paradigm but is tailored to human activity recognition and relies on calibration-based hyperparameter tuning to define conformity scores, introducing computational overhead and compromising strict finite-sample coverage guarantees. Our proposed framework, on the other hand, is model-agnostic across diverse modalities (e.g., voice, images, and neural signals), remains hyperparameter-lean with rigorous statistical guarantees, and refines internal decision boundaries rather than operating as a post-hoc wrapper, thereby potentially improving both uncertainty quantification and point-prediction performance.

Building on foundational VSA frameworks~\citep{Kanerva2009HyperdimensionalCA, plate1995holographic, heddes2024hyperdimensional}, methodological research in HDC has focused on two primary directions: developing more expressive and robust encoding schemes to map raw data into high-dimensional space~\citep{yu2022understanding, ni2022algorithm}, and refining the prototyping process. Recent works have introduced adaptive and single-pass training rules that dynamically update class representations to mitigate vector saturation and improve discriminative power~\citep{verges2023refineHD, hernandez2021onlinehd, imani2019adaptHD}. While these advances enhance the quality of learned prototypes, the underlying inference mechanism remains similarity-based classification.

In parallel, application-driven studies have tailored HDC pipelines to meet the constraints of domains such as biosignals classification~\citep{ni2022neurally, yu2024lifelong}, wearable health monitoring~\citep{shahhosseini2022flexible}, autonomous vehicles sensor readings~\citep{wang2021brief},
and structured graph learning~\citep{ nunes2022graphhd, poursiami2025vs, cong2025hypergraphx}. Supporting these deployments, systems-level research has emphasized hardware efficiency through fully binary pipelines and memory-centric designs for energy-constrained platforms~\citep{gupta2020thrifty, imani2019bric}.

Our work builds upon this thriving ecosystem by introducing a versatile uncertainty quantification layer that is fully compatible with these developments. As a model-agnostic framework, ConformalHDC requires only class prototypes, independent of the encoding strategy or training regime, allowing seamless integration into existing HDC pipelines. Moreover, it preserves the memory and computational efficiency necessary to satisfy stringent hardware constraints.


\section{Preliminary}
\subsection{HDC classifiers}\label{sec:review-hdc}
Consider $n$ pairs of observations $(X_i,Y_i)$, for $i \in \mathcal{D} = [n] = \cbrac{1, \dots, n}$, and a test point $(X_{n+1}, Y_{n+1})$ whose label $Y_{n+1} \in [K]$ has not yet been observed. HDC classifiers aim to produce a prediction $\widehat{Y}_{n+1}$ given $X_{n+1}$ and the observations indexed by $\mathcal{D}$. To this end, they leverage the ``blessing of dimensionality," based on the principle that randomly sampled vectors from a high-dimensional space are nearly orthogonal with high probability due to concentration of measure~\citep{Ledoux2001concentration}. Therefore, projecting features into a high-dimensional space can render complex data patterns that remain easily differentiable.

The methodology consists of two key components. First, an encoder, denoted as $\phi: \mathcal{X} \mapsto \mathcal{H}$, maps an input feature vector $X \in \mathcal{X}$ to a high-dimensional hypervector $H = \phi(X)$. This encoding, often achieved using methods such as Compositional Encoding~\citep{rahimi2016compositional, kleyko2014ngram} or Random Projection~\citep{rahimi2007RFF}, is designed to preserve the similarity of the original data points in the high-dimensional space $\mathcal{H}$. Second, for each class $y\in[K]$, a \textit{prototype} hypervector, $H^y \in \mathcal{H}$, is constructed from the corresponding hypervectors of training samples with that specific label~\citep{Rahimi2016ARA, imani2019adaptHD, verges2023refineHD}. During inference, the test feature $X_{n+1}$ is encoded into a query hypervector $H_{n+1} =\phi(X_{n+1})$, and its label is predicted by identifying the class prototype with the highest similarity, namely, $\widehat{Y}_{n+1} = \arg\max_{y \in [K]} \delta(H_{n+1}, H^y)$, where $\delta(\cdot, \cdot)$ can be any similarity metrics, including Cosine Similarity or Hamming Distance~\citep{lulu-2020-HDCreview}.

\subsection{Conformal inference}\label{sec:review-conformal}
Instead of providing a point prediction, the objective of conformal inference is to construct a prediction set $\widehat{C}_{\alpha}(X_{n+1})$ for $Y_{n+1}$, given a fixed significance level $\alpha \in (0,1)$. The ideal goal is to create the smallest possible prediction set that guarantees {\em feature-conditional coverage} at level $1-\alpha$, meaning $\mathbb{P}\cbrac{Y_{n+1} \in \widehat{C}_{\alpha}(X_{n+1}) \mid X_{n+1} = x} \geq 1-\alpha$ for any feature vector $x \in \mathcal{X}$.
However, achieving this is often infeasible unless the feature space $\mathcal{X}$ is very small~\citep{barber2019limits}. Consequently, one often seeks more practical guarantees, such as {\em label-conditional coverage} and {\em marginal coverage}. Formally,  $\widehat{C}_{\alpha}(X_{n+1})$ has $1-\alpha$ label-conditional coverage if $\mathbb{P} \cbrac{Y_{n+1} \in \widehat{C}_{\alpha}(X_{n+1}) \mid Y_{n+1} = y} \geq 1-\alpha$, for any $y \in [K]$, and marginal coverage if $\mathbb{P}\cbrac{Y_{n+1} \in \widehat{C}_{\alpha}(X_{n+1})} \geq 1-\alpha$. While label-conditional coverage is stronger, the marginal criterion is also valuable as it can be easier to achieve with smaller, more informative prediction sets.

Standard split conformal inference partitions $\mathcal{D}$ into a training set, $\mathcal{D}_{\text{train}}$, and a calibration set, $\mathcal{D}_{\text{cal}}$. A black-box classifier (in our case, a HDC classifier) is trained on $\mathcal{D}_{\text{train}}$. The holdout data in $\mathcal{D}_{\text{cal}}$ is then used to calibrate the model via a \textit{nonconformity score} function $\widehat{S}: \mathcal{X}\times[K] \mapsto \mathbb{R}$, where smaller value indicate the feature ``conforms'' with the given class. For each calibration point $i \in \mathcal{D}_{\text{cal}}$, we compute the score $\widehat{S}(X_i, Y_i)$ and find the $1-\alpha$ empirical quantile of these scores, denoted $\widehat{Q}_{1-\alpha}$ (assuming no ties for simplicity). The final prediction set is then constructed as: 
\begin{align}\label{eq:conformal-pset}
    \widehat{C}_{\alpha}(X_{n+1})=\cbrac{y \in [K]: \widehat{S}(X_{n+1}, y) \leq \widehat{Q}_{1-\alpha}}.
\end{align} 
If the data points in $(X_{n+1}, Y_{n+1}) \cup \cbrac{(X_i, Y_i)}_{i\in \mathcal{D}}$ are exchangeable (a weaker requirement than i.i.d.), $\widehat{C}_{\alpha}(X_{n+1})$ offers finite-sample marginal coverage without any distributional assumptions other than exchangeability~\citep{romano2020classification}. Stronger label-conditional guarantees can be obtained by stratifying the calibration set by labels and computing label-wise quantiles as detailed in \Cref{sec:set-valued-CHDC}.

\section{Methodology}\label{sec:method}
\subsection{Nonconformity scores for HDC}\label{sec:conformity-scores}
The procedure reviewed in \Cref{sec:review-conformal} provides a general framework of constructing conformal prediction sets, but the key component, the nonconformity score function, remains implicit. In this section, we argue that existing scores are ill-suited for HDC classifiers and introduce a novel nonconformity score that is both computationally efficient and specifically tailored to the HDC architecture.

Existing nonconformity scores present significant challenges for HDC. Standard scores, such as the inverse quantile score proposed by~\citet{romano2020classification} (reviewed in \Cref{app:class-scores}), assume that for any $x \in \mathcal{X}$ and $y \in [K]$, the classifier outputs softmax probabilities $\widehat{\pi}_{y}(x) \in [0,1]$, e.g., from the softmax output layer of a deep neural network, that approximate the true conditional distribution, $\mathbb{P}(Y=y \mid X=x)$, and $\sum_{y \in [K]}\widehat{\pi}_{y}(x) = 1$. HDC classifiers, however, output raw similarity measures, $\delta(h, H^y)$, not probabilities. Passing these similarities through a softmax function constitutes a flawed workaround, as it normalizes away critical information necessary for prediction deferral. For example, any point along a classical HDC decision boundary is equidistant from two class prototypes, yielding identical normalized probabilities (e.g., [0.5, 0.5]). As a result, these points become indistinguishable, hindering the classifier's ability to abstain. As shown in \Cref{fig:boundary-comparison}, the inverse quantile score leads to decision boundaries that are unable to perform outlier detection and lose meaningful geometric interpretation.

A different line of work constructs nonconformity scores via class-conditional density estimation. For example, \citet{hechtlinger2018cautious} propose to estimate $\mathbb{P}(X=x \mid Y=y)$ using methods like Kernel Density Estimation (KDE). 
Using this idea, we construct a benchmark, called penalized score: 
\begin{equation}\label{eq:score-penalized}
\begin{aligned}
    \widehat{S}_{\mathrm{pen}}(X, Y) &=\widehat{S}_{\mathrm{pen}}(X, Y; \phi, \cbrac{H^y}_{y\in[K]}) \\&=-\delta(\phi(X),H^{Y}) +\lambda \sum\nolimits_{y \neq Y} \delta(\phi(X),H^{y}).
\end{aligned}
\end{equation}
where $\lambda \geq 0$. It mimics the correlated density estimation $\mathbb{P}(X=x \mid Y=y)-\lambda\sum\nolimits_{y'\neq y}\mathbb{P}(X=x \mid Y=y')$ which models class interactions by penalizing high density regions for the other classes \citep{hechtlinger2018cautious}. The primary limitation of this method lies in its strong sensitivity to the hyperparameter $\lambda$. As shown in \Cref{fig:boundary-comparison}, an appropriate $\lambda$ can prevent meaningful abstention, while if $\lambda$ is too small, the score degenerates towards a similarity only score in \Cref{eq:score-sim} which overlooks class interactions.

To address these issues, we propose the following nonconformity score (CHDC-discount) tailored for HDC classifiers:
\begin{equation}\label{eq:score-adj-sim}
\begin{aligned}
    \widehat{S}(X, Y) & =\widehat{S}(X, Y; \phi, \cbrac{H^y}_{y\in[K]}) \\ &= -\frac{\delta(\phi(X),H^{Y})}{\sum_{y \in [K]} \delta(\phi(X),H^{y})}\cdot \delta(\phi(X),H^{Y}).
\end{aligned}
\end{equation} 
This score can be understood as the base similarity to the correct class prototype, $\delta(\phi(x),H^{Y})$, weighted by a ratio that approximates the likelihood of the correct class versus all others. Note that by convention, smaller nonconformity score indicate stronger evidence of $X$ belonging to class $Y$. 

Intuitively, our proposed score in \eqref{eq:score-adj-sim} is constructed from two key components: a similarity term that enables abstention, and a ratio term that models class interactions. To analyze their individual contributions, we define two alternative scores based on each component in isolation: the similarity score,
\begin{align}\label{eq:score-sim}
    \widehat{S}_{\mathrm{sim}}(X, Y) = -\delta(\phi(X),H^{Y}),
\end{align}
and the ratio score (CHDC-ratio),
\begin{align}\label{eq:score-ratio}
    \widehat{S}_{\mathrm{ratio}}(X, Y) = -\frac{\delta(\phi(X),H^{Y})}{\sum_{y \in [K]}\delta(\phi(X),H^{y})}.
\end{align}
To ensure the scores in \eqref{eq:score-penalized}--\eqref{eq:score-ratio} are well-defined, we require that $\delta \geq 0$. This is a standard practice in the literature and is achieved by normalizing the similarity to a nonnegative range, such as $[0, 1]$; see \Cref{app:positive-similarity} for further details.

As illustrated in \Cref{fig:boundary-comparison}, using the similarity score, $\widehat{S}_{\mathrm{sim}}$, results in significant overlap between class boundaries. This produces less informative prediction sets, as points in these ambiguous regions are assigned multiple labels. In contrast, the ratio score, $\widehat{S}_{\mathrm{ratio}}$, produces sharper boundaries by explicitly modeling class interactions, but it loses the ability to abstain (\Cref{fig:boundary-comparison}). Because it captures only relative confidence, it cannot distinguish an outlier point far from all prototypes from an inlier point with the same ratio score.

In contrast, the discount score in \eqref{eq:score-adj-sim} balances separation and abstention effectively without introducing a sensitive hyperparameter that requires additional tuning. 

\begin{figure*}[!t]
    \centering 
    \includegraphics[width=\textwidth]{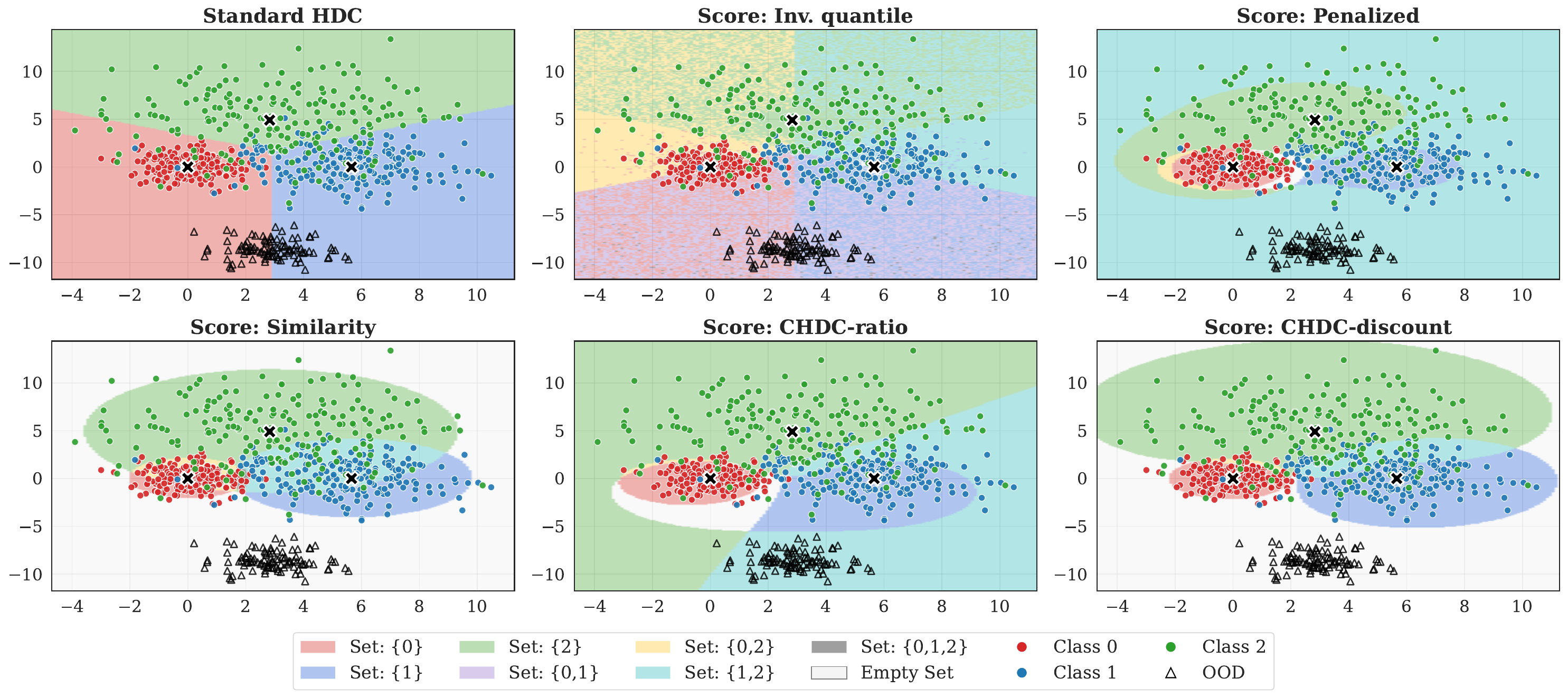}
    \caption{Decision boundary comparison between standard HDC and conformal methods with different nonconformity scores in \cref{sec:conformity-scores}. Other details remain the same as in \Cref{fig:hdc-vs-discount}.}
    \label{fig:boundary-comparison}
\end{figure*}

\subsection{Set-valued ConformalHDC}\label{sec:set-valued-CHDC}
With the nonconformity scores defined, we are ready to state the set-valued conformalHDC which outputs a prediction set $\widehat{C}_{\alpha}(X_{n+1})$.
Given a nonconformity score, \Cref{alg:set-conformalHDC-marginal} formalizes the procedure for generating conformal prediction sets. Subsequently, \Cref{thm:marginal} establishes that this approach produces prediction sets satisfying a tight marginal coverage guarantee.
\begin{theorem}\label{thm:marginal}
Assume $(X_{1},Y_{1}), \ldots, (X_{n+1},Y_{n+1})$ are exchangeable, and let $\widehat{C}_{\alpha}(X_{n+1})$ be the output of Algorithm~\ref{alg:set-conformalHDC-marginal}. Then, for any $\alpha \in (0,1)$,
\begin{align*}
    \mathbb{P}\cbrac{Y_{n+1} \in \widehat{C}_{\alpha}(X_{n+1})} \geq 1-\alpha.
\end{align*}
Furthermore, if the scores $\widehat{S}_i$ are almost surely distinct, the coverage is upper bounded by
\begin{align*}
    \mathbb{P}\cbrac{Y_{n+1} \in \widehat{C}_{\alpha}(X_{n+1})} \leq 1-\alpha + \frac{1}{1+|\mathcal{D}_{\mathrm{cal}}|}.
\end{align*}
The probabilities above are taken over the randomness of $(X_{1},Y_{1}), \ldots, (X_{n+1},Y_{n+1})$.
\end{theorem}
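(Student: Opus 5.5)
The plan is the standard split-conformal exchangeability argument. Algorithm~\ref{alg:set-conformalHDC-marginal} splits $\mathcal{D}$ into $\mathcal{D}_{\mathrm{train}}$ and $\mathcal{D}_{\mathrm{cal}}$, and both the encoder $\phi$ and the prototypes $\{H^y\}_{y\in[K]}$ are built from $\mathcal{D}_{\mathrm{train}}$ only. I will first condition on $\mathcal{D}_{\mathrm{train}}$, so that $\widehat{S}(\cdot,\cdot)$ becomes a fixed deterministic function. Exchangeability of the full sequence $(X_1,Y_1),\dots,(X_{n+1},Y_{n+1})$ implies that, conditionally on $\mathcal{D}_{\mathrm{train}}$, the points indexed by $\mathcal{D}_{\mathrm{cal}}\cup\{n+1\}$ are still exchangeable. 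Applying the fixed map $\widehat{S}$ coordinatewise then shows that the $m+1$ scores $\widehat{S}_i=\widehat{S}(X_i,Y_i)$, $i\in\mathcal{D}_{\mathrm{cal}}\cup\{n+1\}$, are exchangeable, where $m=|\mathcal{D}_{\mathrm{cal}}|$. Nothing about the specific score \eqref{eq:score-adj-sim} is needed beyond measurability and $\delta\ge 0$ for well-definedness.

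Next I reduce coverage to a rank event. By \eqref{eq:conformal-pset}, $Y_{n+1}\in\widehat{C}_\alpha(X_{n+1})$ iff $\widehat{S}_{n+1}\le \widehat{Q}_{1-\alpha}$. I will take $\widehat{Q}_{1-\alpha}$ to be the $\lceil (1-\alpha)(m+1)\rceil$-th smallest calibration score, set to $+\infty$ if this index exceeds $m$; this is the finite-sample-corrected quantile the algorithm should use. The key deterministic lemma is that $\widehat{S}_{n+1}\le \widehat{Q}_{1-\alpha}$ iff $\widehat{S}_{n+1}$ is among the $k:=\lceil(1-\alpha)(m+1)\rceil$ smallest of the augmented multiset $\{\widehat{S}_i\}_{i\in\mathcal{D}_{\mathrm{cal}}\cup\{n+1\}}$, with ties broken in favor of the test point. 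This follows because inserting the test score does not change the $k$-th order statistic of the calibration scores when the test score lies below it.

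For the lower bound, I use exchangeability: the rank of $\widehat{S}_{n+1}$ among the $m+1$ scores is stochastically no larger than uniform on $\{1,\dots,m+1\}$, with equality in the no-ties case, and ties only help because of the favorable tie-breaking. Hence $\mathbb{P}\{\text{rank}\le k\mid\mathcal{D}_{\mathrm{train}}\}\ge k/(m+1)\ge 1-\alpha$. For the upper bound, almost-sure distinctness makes the rank exactly uniform, so the conditional probability equals $k/(m+1) < (1-\alpha)(m+1)/(m+1)+1/(m+1)$. Taking expectation over $\mathcal{D}_{\mathrm{train}}$ then gives both bounds unconditionally. The edge case $k>m$ gives $\widehat{Q}=\infty$ and coverage $1$, which is consistent with both bounds since $k/(m+1)\ge 1$ there.

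The main obstacle is the tie-handling in the rank-equivalence lemma. I would address it by a symmetry argument: for exchangeable $Z_1,\dots,Z_{m+1}$,
\[
\mathbb{E}\Big[\textstyle\sum_j \mathbf{1}\{Z_j\le Z_{(k)}\}\Big]\ge k,
\]
where $Z_{(k)}$ is the $k$-th order statistic. Each term then has probability at least $k/(m+1)$. I also need to match the quantile convention precisely to whatever Algorithm~\ref{alg:set-conformalHDC-marginal} specifies.
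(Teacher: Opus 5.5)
Your proposal is correct and follows the same route as the paper. The paper reduces coverage to $\widehat{S}_{n+1}\le\widehat{Q}_{1-\alpha}$, notes that the calibration and test scores are exchangeable, and then cites the quantile-inflation lemma of Romano et al.\ (2019) for both bounds; you prove that rank argument directly, and you also supply the $+\infty$ convention for $k=m+1$, which Algorithm~\ref{alg:set-conformalHDC-marginal} leaves implicit.

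One wording slip: your stated reason for the rank equivalence is off. Inserting a test score below the $k$-th calibration order statistic can lower the $k$-th order statistic of the augmented sample. The equivalence itself still holds by a short counting argument: $\widehat{S}_{n+1}\le\widehat{Q}_{1-\alpha}$ if and only if $\widehat{S}_{n+1}$ is at most the $k$-th smallest augmented score.
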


\begin{algorithm}[h]
    \caption{Set-valued CHDC with marginal coverage}
    \label{alg:set-conformalHDC-marginal}
    \begin{algorithmic}[1]
        \STATE \textbf{Input}: Exchangeable data $(X_{1},Y_{1}), \ldots, (X_{n},Y_{n})$ with labels $Y_i \in [K]$; test features $X_{n+1}$; significance level $\alpha$; a prototype-based HDC classifier.
        \STATE Randomly split the data points into $\mathcal{D}_{\text{train}}$ and $\mathcal{D}_{\text{cal}}$.
        \STATE Train the HDC classifier on $\mathcal{D}_{\text{train}}$ to obtain the encoder $\phi$, and class prototypes $\cbrac{H^y}_{y\in[K]}$.
        \STATE Compute scores $\widehat{S}_i=\widehat{S}_i(X_{i},Y_i)$, $\forall i \in \mathcal{D}_{\text{cal}}$, using \eqref{eq:score-adj-sim}.
        \STATE Compute $\widehat{Q}_{1-\alpha} \coloneqq \widehat{Q}_{1-\alpha}(\{\widehat{S}_i\}_{i \in \mathcal{D}_{\text{cal}}})$ as the $\lceil (1-\alpha)(1+|\mathcal{D}_{\text{cal}}|) \rceil$-th smallest value among $\{\widehat{S}_i\}_{i \in \mathcal{D}_{\text{cal}}}$.
        \STATE Compute the prediction set for the unobserved label $Y_{n+1}$, given $X_{n+1}$, by \eqref{eq:conformal-pset}.
        \STATE \textbf{Output}:  Prediction set $\widehat{C}_{\alpha}(X_{n+1})$ with marginal coverage guarantee.
    \end{algorithmic}
\end{algorithm}

In certain settings, stronger label-conditional coverage is required, particularly in contexts where fairness is a concern or the result is label-sensitive. To this end, we derive a variation of the prediction sets that produces such label-conditional coverage by computing label dependent empirical quantiles. \Cref{alg:set-conformalHDC-cond} summarizes the procedure and \Cref{thm:conditional} below proves its validity. 
\begin{theorem}\label{thm:conditional}
Assume $(X_{1},Y_{1}), \ldots, (X_{n+1},Y_{n+1})$ are exchangeable, and let $\hat{C}_{\alpha}(X_{n+1})$ be the output of Algorithm~\ref{alg:set-conformalHDC-cond}. Then, for any $\alpha \in (0,1)$ and $y \in [K]$,
\begin{align*}
    \mathbb{P}\cbrac{Y_{n+1} \in \hat{C}_{\alpha}(X_{n+1}) \mid Y_{n+1}=y} \geq 1-\alpha.
\end{align*}
Furthermore, if the scores $\hat{S}_i$ are almost surely distinct, the coverage is upper bounded by
\begin{align*}
    \mathbb{P}\cbrac{Y_{n+1} \in \hat{C}_{\alpha}(X_{n+1})\mid Y_{n+1}=y} \leq 1-\alpha + \frac{1}{1+|\mathcal{D}^y_{\mathrm{cal}}|}, 
\end{align*}
where $\mathcal{D}^y_{\mathrm{cal}} = \{i \in \mathcal{D}_{\mathrm{cal}} : Y_i = y \}$. The probabilities above are taken over the randomness of $(X_{1},Y_{1}), \ldots, (X_{n+1},Y_{n+1})$.
\end{theorem}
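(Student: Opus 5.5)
We prove \Cref{thm:conditional} by reducing it to the marginal argument behind \Cref{thm:marginal}, applied within the stratum of label $y$. Algorithm~\ref{alg:set-conformalHDC-cond} is not shown above, so we assume it does the following. It uses the same random split and the same training step, producing $\phi$ and $\{H^{y'}\}_{y'\in[K]}$ from $\mathcal{D}_{\mathrm{train}}$ only. For each label $y$, it computes $\widehat{Q}^y_{1-\alpha}$ as the $\lceil (1-\alpha)(1+|\mathcal{D}^y_{\mathrm{cal}}|)\rceil$-th smallest score among $\{\widehat{S}_i\}_{i\in\mathcal{D}^y_{\mathrm{cal}}}$, with the convention that $\widehat{Q}^y_{1-\alpha}=+\infty$ if this index exceeds $|\mathcal{D}^y_{\mathrm{cal}}|$. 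It then includes $y$ in $\hat{C}_\alpha(X_{n+1})$ iff $\widehat{S}(X_{n+1},y)\le \widehat{Q}^y_{1-\alpha}$. Under these conventions, on the event $Y_{n+1}=y$ we have $Y_{n+1}\in\hat C_\alpha(X_{n+1})$ iff $\widehat{S}_{n+1}:=\widehat{S}(X_{n+1},Y_{n+1})\le \widehat{Q}^y_{1-\alpha}$.

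\textbf{Step 1: conditioning.} Condition on the training data, which fixes the score function $\widehat{S}$. Also condition on the set of calibration indices $\mathcal{D}^y_{\mathrm{cal}}$ carrying label $y$ (equivalently on the label vector of $\mathcal{D}_{\mathrm{cal}}\cup\{n+1\}$), together with the event $Y_{n+1}=y$. Exchangeability of the full sequence, with the split drawn independently of the data, implies that the pairs $\{(X_i,Y_i)\}_{i\in\mathcal{D}^y_{\mathrm{cal}}\cup\{n+1\}}$ remain exchangeable under this conditioning. The conditioning event is symmetric in these indices, because it depends on them only through their labels, which all equal $y$. Since $\widehat{S}$ is a fixed function given the training data, the $m+1$ scores $\{\widehat{S}_i\}_{i\in\mathcal{D}^y_{\mathrm{cal}}}\cup\{\widehat{S}_{n+1}\}$ are exchangeable, where $m=|\mathcal{D}^y_{\mathrm{cal}}|$.

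\textbf{Step 2: rank argument.} By exchangeability, the rank of $\widehat{S}_{n+1}$ among these $m+1$ scores is stochastically dominated by the uniform distribution on $\{1,\dots,m+1\}$, with ties broken in its favor. It is exactly uniform when the scores are a.s.\ distinct. Let $k=\lceil(1-\alpha)(m+1)\rceil$. The event $\widehat{S}_{n+1}\le\widehat{Q}^y_{1-\alpha}$ is equivalent to $\widehat{S}_{n+1}$ being among the $k$ smallest of the $m+1$ scores. When $k>m$, this holds trivially by the $+\infty$ convention. Hence the conditional coverage is at least $k/(m+1)\ge 1-\alpha$. With distinct scores it equals $k/(m+1) < 1-\alpha+1/(m+1)$.

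\textbf{Step 3: integrating out.} Take expectations over the training data and the calibration labels, conditional on $Y_{n+1}=y$, which gives the lower bound. For the upper bound, the bound $1-\alpha+1/(1+|\mathcal{D}^y_{\mathrm{cal}}|)$ holds conditionally on the realized value of $|\mathcal{D}^y_{\mathrm{cal}}|$. We read the statement's right-hand side in that conditional sense, since $|\mathcal{D}^y_{\mathrm{cal}}|$ is random.

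The main obstacle is Step 1: justifying that exchangeability survives conditioning on the labels. The cleanest route is to condition on the unordered multiset of the $m+1$ pairs with label $y$. Given that multiset, every assignment of the pairs to the positions in $\mathcal{D}^y_{\mathrm{cal}}\cup\{n+1\}$ is equally likely. This holds because permuting data indices within equal-label positions preserves both the joint law and the conditioning event. The rest is the standard quantile lemma used for \Cref{thm:marginal}.
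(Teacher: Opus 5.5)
Your proposal is correct and follows the paper's route. The paper likewise reduces to the quantile-inflation argument behind \Cref{thm:marginal}, applied to $\mathcal{D}^y_{\mathrm{cal}}\cup\{n+1\}$, using that, given $Y_{n+1}=y$, the test point is exchangeable with the label-$y$ calibration points. You fill in details the paper leaves implicit. These are: why exchangeability survives conditioning on the labels; the $+\infty$ convention when $\lceil(1-\alpha)(1+|\mathcal{D}^y_{\mathrm{cal}}|)\rceil > |\mathcal{D}^y_{\mathrm{cal}}|$; and the point that the upper bound must be read conditionally on the random stratum size $|\mathcal{D}^y_{\mathrm{cal}}|$.
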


All proofs are provided in \Cref{app:proofs}.

\begin{algorithm}[h]
    \caption{Set-valued ConformalHDC with label-conditional coverage}
    \label{alg:set-conformalHDC-cond}
    \begin{algorithmic}[1]
       \STATE \textbf{Input}: Same as \Cref{alg:set-conformalHDC-marginal}.
        \STATE Randomly split the data points into $\mathcal{D}_{\text{train}}$ and $\mathcal{D}_{\text{cal}}$.
        \STATE Train the HDC classifier on $\mathcal{D}_{\text{train}}$ to obtain the encoder $\phi$, and class prototypes $\cbrac{H^y}_{y\in[K]}$.
        \STATE Compute scores $\hat{S}_i=\hat{S}_i(X_{i},Y_i)$, $\forall i \in \mathcal{D}_{\text{cal}}$, using \eqref{eq:score-adj-sim}.
        \FOR{$ y \in [K]$}
        \STATE Define $\mathcal{D}^y_{\text{cal}} = \{i \in \mathcal{D}_{\text{cal}} : Y_i = y \}$.
        \STATE Compute $\widehat{Q}^y_{1-\alpha} \coloneqq \widehat{Q}^y_{1-\alpha}(\{\hat{S}_i\}_{i \in \mathcal{D}^y_{\text{cal}}})$ as the $\lceil (1-\alpha)(1+|\mathcal{D}^y_{\text{cal}}|) \rceil$-th smallest value among $\{\hat{S}_i\}_{i \in \mathcal{D}^y_{\text{cal}}}$.
        \ENDFOR
        \STATE Compute the prediction set by \begin{equation}\label{eq:conditional-pset}
            \hat{C}_{\alpha}(X_{n+1})=\cbrac{y \in [K]: \hat{S}(X_{n+1}, y) \leq \widehat{Q}^y_{1-\alpha}}.
        \end{equation}
        \vspace{-1em}
        \STATE \textbf{Output}:  Prediction set $\hat{C}_{\alpha}(X_{n+1})$ with label-conditional coverage guarantee.
    \end{algorithmic}
\end{algorithm}

\subsection{Point-valued ConformalHDC}\label{sec:point-valued-CHDC}
Although the set-valued conformalHDC procedures in \Cref{alg:set-conformalHDC-marginal} and \Cref{alg:set-conformalHDC-cond} offer rigorous coverage guarantees, in practice point-valued predictors are sometimes preferred since most downstream decision-making depends on a single prediction rather than a prediction set. To this end, \Cref{alg:point-conformalHDC} presents a point-valued ConformalHDC procedure that trims the prediction set. More specifically, it refines the prediction set by \Cref{alg:set-conformalHDC-marginal} or \Cref{alg:set-conformalHDC-cond} whenever $\widehat{C}_{\alpha}(X_{n+1})$ contains more than one class, retaining only the label with the smallest nonconformity score as defined in \Cref{eq:score-adj-sim}. Consequently, the point-valued conformalHDC by \Cref{alg:point-conformalHDC} always outputs either a singleton or the empty set.

Compared to standard HDC, which is also a point-valued predictor, point-valued conformalHDC leverages the more informative decision boundaries visualized in \Cref{fig:hdc-vs-discount}. As observed in \Cref{sec:synthetic-exp}, this can lead to improved classification performance by accounting for class interactions. Furthermore, when facing distribution shifts or data contamination, it retains the ability to abstain. By outputting an empty set, it identifies spurious test points that should be deferred to a human-in-the-loop or a more sophisticated expert model. For scenarios where practitioners are confident that test points come from inlier classes only, \Cref{alg:point-conformalHDC} provides the flexibility to disable this feature, defaulting to the most similar class instead of an empty set.

\begin{algorithm}[h]
    \caption{Point-valued CHDC}
    \label{alg:point-conformalHDC}
    \begin{algorithmic}[1]
        \STATE \textbf{Input}: Boolean flag \texttt{allow\_empty} indicating if empty set is allowed.
        \STATE Compute $\widehat{C}_{\alpha}(X_{n+1})$ by \Cref{alg:set-conformalHDC-marginal} or \Cref{alg:set-conformalHDC-cond}.
        \IF{$\abs{\widehat{C}_{\alpha}(X_{n+1})}>1$}
        \STATE Update $\widehat{C}_{\alpha}(X_{n+1})$ with the singleton 
         \vspace{-0.2em}
        \begin{align*}
            \widehat{C}_{\alpha}(X_{n+1}) = \cbrac*{\arg\min\nolimits_{y \in \widehat{C}_{\alpha}(X_{n+1})} \widehat{S}(X_{n+1},y)}.
        \end{align*}
        \ELSIF{$\abs{\widehat{C}_{\alpha}(X_{n+1})}=0$ and not \texttt{allow\_empty}}
        \STATE \textcolor{white}{s} 
        \vspace{-1em}
        \begin{align*}
            \widehat{C}_{\alpha}(X_{n+1}) = \cbrac*{\arg\min\nolimits_{y \in [K]} \widehat{S}(X_{n+1},y)}.
        \end{align*}
        \vspace{-1em}
        \ENDIF
        \STATE \textbf{Output}:  Trimmed prediction set $\widehat{C}_{\alpha}(X_{n+1})$.
    \end{algorithmic}
\end{algorithm}

\subsection{Computational Complexity}
The proposed ConformalHDC framework is designed for high computational efficiency and scalability, maintaining the high-speed execution characteristic of prototype-based architectures. For all proposed algorithms, the total computational complexity scales as $\mathcal{O}(T + n_{\text{cal}}Kd + mKd)$, where $T$ denotes the base training cost (including encoding and prototype construction), $n_{\text{cal}}=\abs{\mathcal{D}_{\text{cal}}}$ is the calibration size, $d$ is hypervector dimension, and $m$ is the number of test points. Since $n_{\text{cal}}$ is typically much smaller than the training set size and the inference cost remains linear with respect to dimension $d$ and class count $K$, the framework offers a rigorous yet lightweight solution for uncertainty quantification in resource-constrained environments. A detailed complexity breakdown is provided in \Cref{app:complexity}.

\section{Synthetic Experiment}\label{sec:synthetic-exp}
To empirically validate the ConformalHDC framework, we simulate a classification task using a Gaussian mixture model that mimics HDC behavior in feature space. In this setup, class prototypes serve as centers of localized data clusters, and the inverse Euclidean distance is used as the similarity metric for score computation. To reflect real-world scenarios where classes exhibit varying degrees of intra-class variability, we fix the dispersion of two classes and vary the standard deviation $\sigma$ of the third from $3$ to $5$ by increment of $0.25$. 
This control parameter allows us to evaluate the framework's ability to adaptively expand prediction sets for high-variance classes while maintaining precise sets for more stable ones. 

\begin{figure}[h]
    \centering
    \includegraphics[width=0.9\linewidth]{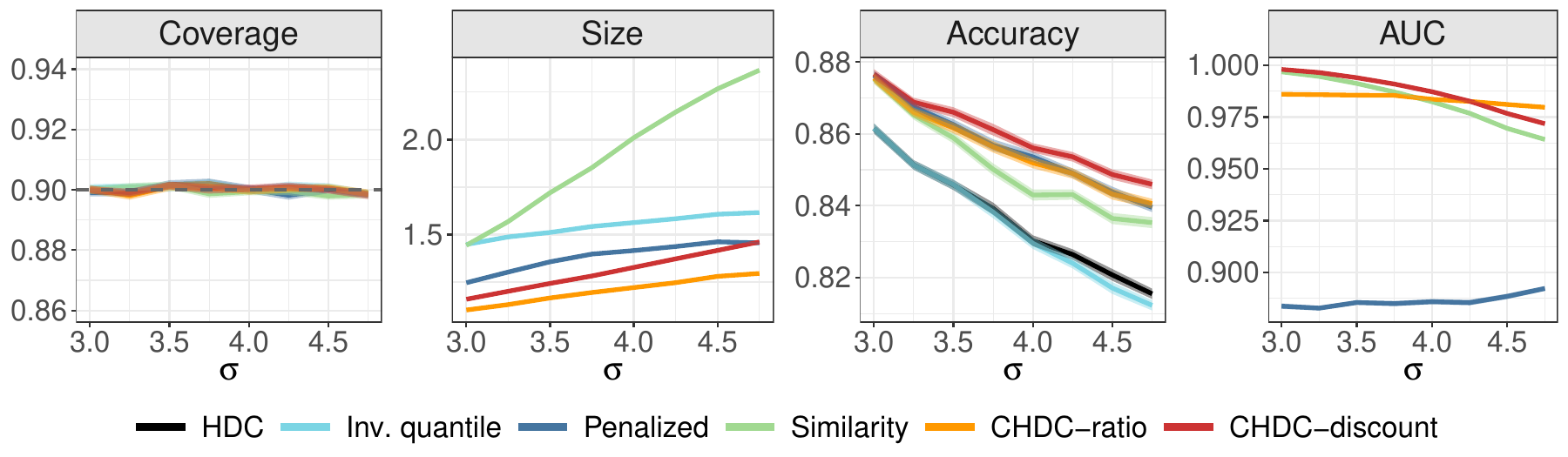}
    \caption{ConformalHDC outperforms benchmarks by providing smaller prediction sets, enhanced point-prediction accuracy, and superior OOD detection. Data generation process is detailed in \Cref{app:synthetic-3class} with $\sigma$ controlling the heteroscedasticity of the data. Nominal coverage level is $90\%$. Results are summarized over 100 repetitions.}
    \label{fig:3class}
\end{figure}


We evaluate the performance of standard HDC against conformal-based methods using different non-conformity scores. We assess the marginal coverage and set size for set-valued outputs. As shown in \Cref{fig:3class}, while all conformal-based methods maintain valid coverage, set-valued ConformalHDC employing ratio and discount scores (CHDC-ratio and CHDC-discount) consistently produces smaller, and thus more informative, prediction sets. Compared to standard HDC and other benchmarks, point-valued ConformalHDC achieves higher prediction accuracy by effectively accounting for class interactions. For OOD detection, the proposed discount score demonstrates robust performance, maintaining a high AUC (area under the curve) across a range of class dispersions. We omit the penalized score from this plot, as its AUC is approximately 0.5 (see \Cref{tab:3-class} in \Cref{app:additional-results}).

\section{Real-data Applications}\label{sec:real-exp}
\subsection{Application to benchmark datasets}\label{sec:exp-benchmarks}

To evaluate ConformalHDC across multiple data modalities, we consider three standard benchmark datasets from the image, speech, and text domains. Each dataset represents a distinct classification task:
\begin{itemize}
    \item \textbf{Image (MNIST).} The MNIST database  \citep{deng2012mnist, manabat2019performance} contains 70,000 grayscale images of handwritten digits (0–9), each with a resolution of 28×28.
    \item \textbf{Voice (ISOLET).} The ISOLET dataset \citep{isolet_54, imani2017voicehd} consists of approximately 7,800 spoken recordings of the 26 English alphabet letters, represented by fixed-length acoustic feature vectors.
    \item \textbf{Text (European Languages).} The European Languages dataset \citep{quasthoff2006corpus, joshi2016language} contains around 21,000 text samples from 21 European languages.
\end{itemize}
For each dataset, several classes are reserved as OOD samples to evaluate abstention performance. Detailed implementation, including the encoding processes, are provided in \Cref{app:benchmark-encoding}. \Cref{tab:real-data} summarizes the marginal coverage and average prediction set size for the uncertainty quantification task, alongside the AUC for OOD detection. While all conformal methods maintain valid coverage, CHDC-ratio, CHDC-discount, and inverse quantile scores generally yield smaller prediction sets. Notably, CHDC-discount consistently excels in OOD detection with high AUC values, whereas the inverse quantile score, as expected, fails in this setting. Comprehensive accuracy comparisons for point-valued predictors are provided in \Cref{tab:mnist}--\Cref{tab:isolet}  in \Cref{app:additional-results}.

\begin{table}[h]
    \centering
    \small                         
    \renewcommand{\arraystretch}{0.9} 
    \setlength{\tabcolsep}{6.8pt}
    \resizebox{\textwidth}{!}{
\begin{tabular}{|l|ccc|ccc|ccc|ccc|}
   \hline
 & \multicolumn{3}{c|}{\textbf{MNIST ($\alpha=0.05$)}} & \multicolumn{3}{c|}{\textbf{Languages ($\alpha=0.01$)}} & \multicolumn{3}{c|}{\textbf{ISOLET ($\alpha=0.02$)}} & \multicolumn{3}{c|}{\textbf{Neural ($\alpha=0.2$)}} \\
\cline{2-13}
\textbf{Method} & \textbf{Cov.} & \textbf{Size} & \textbf{AUC} & \textbf{Cov.} & \textbf{Size} & \textbf{AUC} & \textbf{Cov.} & \textbf{Size} & \textbf{AUC} & \textbf{Cov.} & \textbf{Size} & \textbf{AUC} \\
\hline
HDC & 0.862 & 1.000 & - & 0.953 & 1.000 & - & 0.889 & 1.000 & - & 0.640 & 1.000 & - \\ 
  Inv. quantile & 0.950 & 2.002 & 0.483 & 0.990 & 2.514 & 0.472 & 0.980 & 3.415 & 0.490 & 0.799 & 1.782 & 0.599 \\ 
  Penalized & 0.950 & 4.218 & 0.655 & 0.990 & 16.392 & 0.294 & 0.980 & 21.344 & 0.536 & 0.796 & 2.901 & 0.016 \\ 
  Similarity & 0.951 & 3.370 & 0.745 & 0.990 & 9.712 & 0.960 & 0.980 & 7.428 & 0.759 & 0.792 & 2.979 & 0.986 \\ 
  CHDC-ratio & 0.950 & 1.770 & 0.873 & 0.990 & 3.291 & 0.982 & 0.979 & 4.412 & 0.759 & 0.796 & 1.643 & 0.600 \\ 
  CHDC-discount & 0.950 & 2.425 & 0.815 & 0.990 & 5.354 & 0.973 & 0.980 & 5.401 & 0.778 & 0.791 & 2.761 & 0.986 \\ 
   \hline
\end{tabular}
}
    \caption{Performance comparison of ConformalHDC and benchmarks under real data applications. Results for benchmark datasets in \Cref{sec:exp-benchmarks} are averaged over 100 repetitions. See \Cref{tab:mnist}--\Cref{tab:isolet} for full results including standard errors. Results for the neural decoding experiment in \cref{sec:exp-neuron} are averaged over 500 repetitions. While results for one representative subject (Buchanan) are shown here, complete results for all five subjects are provided in \Cref{tab:rat}.}
    \label{tab:real-data}
\end{table}

\subsection{Application to Neural Decoding}\label{sec:exp-neuron}

We also apply our framework to a neural dataset involving hippocampal activity recorded from rats performing a complex sequence memory task~\citep{shahbaba2022hippocampal}. Our objectives are threefold: first, we encode temporal trajectories of population activity into a high-dimensional representation; second, we provide decoding (prediction) with rigorous uncertainty quantification; and third, we differentiate the activity between distinct behavioral states (odor sampling vs running).

In this task, well-trained rats received multiple presentations of a sequence of 5 odor stimuli (e.g., ABCDE) at a single odor port. For each odor, the rats had to correctly determine if the odor was presented in the expected order (by holding their nose in the port until the signal at 1.2s) to earn a water reward. To ensure clear segmentation between sequences, rats had to run to the opposite end of the track and return to the port before the next sequence could be initiated. This running period was used to define a "Run" state (0.2s period at the center of the track) that served as OOD data, as neural activity is known to differ between odor sampling and running behavior. \citep{allen2016nonspatial}

\begin{figure}[!h]
    \centering
    \includegraphics[width=1\linewidth]{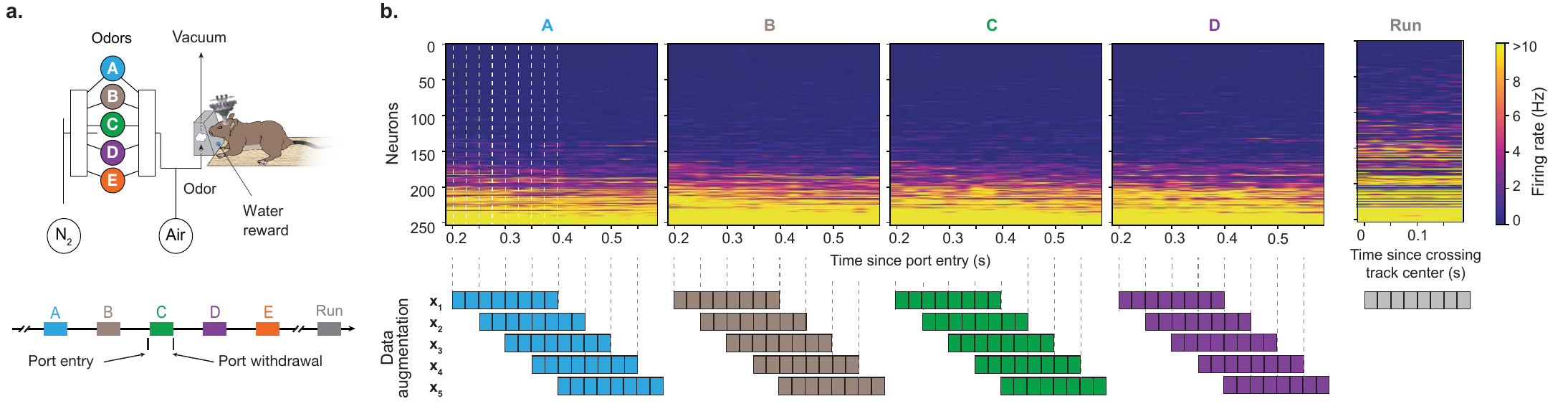}
    \caption{Overview of the behavioral task and neural activity data processing. (a) Schematic of the sequence memory task in which each presentation of the sequence of non-spatial stimuli (odors ABCDE) is followed by a running period (Run). For each odor presentation, subjects (rats) must perform the correct response to receive a water reward. (b) Visualization of neural spiking activity aggregated from all five rats across different experimental states (odors A--D and running) and the subsequent data augmentation and binning process.}
    \label{fig:rat-raw-data}
\end{figure}

The following analysis utilizes neural spiking activities of the rats during correctly identified InSeq trials. To maintain class balance, we focused on the first four stimuli in each sequence, as incorrect responses terminated the sequence and sample sizes decreased with sequence position. Specifically, we trained the model on the window of 0.2 - 0.6s following port entry to decode the odor identity. These training periods were selected based on prior research identifying them as the most representative intervals for stimulus processing \citep{shahbaba2022hippocampal}. To make the model training more robust, we utilize a data augmentation process by applying a sliding window procedure during the training periods, treating each window as an independent sample.  The spiking activity was then discretized into adjacent bins to estimate firing rates and mitigate data sparsity. The resulting preprocessed input data, $X$, comprises three dimensions: number of samples, number of time bins, and number of neurons. \Cref{fig:rat-raw-data} provides a visualization of the memory task and the data processing steps.

Finally, we implemented Fourier Holographic Reduced Representations (FHRR) to encode these features into hypervectors representing temporal trajectories within the high-dimensional state space. We present the encoding details in \Cref{app:neuron-encoding}.

Results for one representative rat are presented in \Cref{tab:real-data}, while the comprehensive results for all five rats are available \Cref{tab:rat} in \cref{app:additional-results}. In this analysis, the ``run'' state data is utilized as the OOD class. Similar to the previous section, we report marginal coverage, average prediction set size, and AUC. Our CHDC-ratio and CHDC-discount methods consistently produce smaller, more efficient prediction sets. In addition, CHDC-discount demonstrates superior and robust performance in OOD detection. 

\section{Discussion}\label{sec:discussion}
In this work, we introduced ConformalHDC, a lightweight framework providing rigorous uncertainty quantification for Hyperdimensional Computing. By utilizing HDC's prototype-based architecture to build adaptive nonconformity scores, we achieve valid coverage while maintaining high-speed execution. 


The current ConformalHDC implementation relies on a split-conformal approach, which requires a distinct calibration set. While computationally efficient, this partitioning can be wasteful in small-data regimes, potentially leading to less representative class prototypes due to the reduced training size. Future extensions could incorporate more parsimonious data-use techniques such as full conformal inference~\citep{vovk2013transductive} or jackknife+~\citep{barber_cv+_2021}. These methods lead to more efficient data usage, allowing more data points for both prototype generation and quantile estimation, thereby enhancing the robustness of the resulting uncertainty bounds when $n$ is limited. 

Furthermore, the framework can be extended to online conformal prediction by incrementally updating prototypes and calibration quantiles as data streams arrive.

\clearpage
\printbibliography 

\clearpage
\appendix
\clearpage

\renewcommand{\thefigure}{\thesection.\arabic{figure}}
\renewcommand{\thetable}{\thesection.\arabic{table}}
\renewcommand{\thealgorithm}{\thesection.\arabic{algorithm}}

\setcounter{figure}{0}
\setcounter{table}{0}
\setcounter{algorithm}{0}

\section{Implementation Details}
\subsection{Standardization of Similarity Measures}\label{app:positive-similarity}
To ensure that the nonconformity scores defined in \Cref{sec:conformity-scores} are well-defined, specifically to maintain the probabilistic interpretation of the ratio terms and the convention of nonconformity score that the smaller scores means conformity, all similarity measurements $\delta(\cdot, \cdot)$ must be nonnegative. We adopt the following standardization procedures for the most common distance and similarity metrics used in the HDC literature.

\textbf{Cosine Similarity.} The raw cosine similarity $\mathrm{cos}(h_1,h_2)$ of two hypervectors ranges from $[-1, 1]$. To map this to the unit interval $[0, 1]$, we apply a linear transformation:
\begin{align*}
\delta_{\cos}(h_1, h_2) = \frac{\mathrm{cos}(h_1,h_2) + 1 }{2}.
\end{align*}
This normalization is a standard procedure in Vector Symbolic Architectures (VSA) to ensure scores remain non-negative without altering the relative ranking of class prototypes \citep{Schlegel2022-uy}.

\textbf{Euclidean Distance.} Unlike cosine similarity, Euclidean distance is a dissimilarity measure where larger values indicate lower correspondence. To convert this into a positive similarity measure, we employ an inverse $L_2$ distance mapping:
\begin{align*}
    \delta_{\text{Euc}}(h_1, h_2) = \frac{1}{\|h_1 - h_2\|_2},
\end{align*}
where $\|\cdot\|_2$ denotes the standard $L_2$ norm. In practice, one can add a small constant to the denominator in the case of an exact match where the distance is zero.

\textbf{Hamming Distance.} The Hamming distance $d_H(h_1, h_2)$ is frequently used with binary hypervectors $\mathcal{H} \in \{0, 1\}^d$ or $\mathcal{H} \in \{-1, +1\}^d$. It is defined as the number of positions at which the corresponding components of the two hypervectors are different:
\begin{align*}
d_H(h_1, h_2) = \sum_{i=1}^d \mathbb{I}(h_{1,i} \neq h_{2,i}),
\end{align*}
where $\mathbb{I}(\cdot)$ is the indicator function. Because $d_H$ is a count of disagreements, it is inherently non-negative ($d_H \in [0, d]$). Consequently, the non-negativity requirement is naturally satisfied without further transformation, allowing it to be used directly in the nonconformity frameworks described in \Cref{sec:conformity-scores}.

\subsection{Prototyping and encoding of benchmark datasets}\label{app:benchmark-encoding}
In this section, we detail the preprocessing and hyperdimensional encoding schemes applied to the MNIST, ISOLET, and European Languages datasets introduced in Section 5.1. For each dataset, the encoding function $\phi(\cdot)$ maps an input sample $X$ into a $d$-dimensional hypervector $H$. In our experiments, we utilized $d=10,000$.

\paragraph{Image Classification (MNIST).}
We utilize the the MNIST dataset from the \texttt{torchvision} library, comprising $28 \times 28$ grayscale images. The images are first flattened into vectors of size $p=784$. We apply a threshold of $0.5$ to binarize the pixel values, resulting in a feature vector $X \in \{0, 1\}^p$.

To encode the spatial structure, we generate a fixed, random bipolar position hypervector $P_j \in \{-1, +1\}^d$ for each pixel index $j \in \{1, \dots, p\}$. The encoding for an image $X$ is obtained by bundling the position hypervectors corresponding to active pixels:
$$H = \text{sign}\left( \sum_{j=1}^{p} X_j P_j \right)$$
where $X_j$ acts as a selector ($1$ if the pixel is active, $0$ otherwise), and the sign function is applied element-wise to return a bipolar hypervector \citep{manabat2019performance}. We use the normalized cosine similarity discussed in \Cref{app:positive-similarity} and class prototypes are constructed by summing the encoded hypervectors of all training samples belonging to a class and applying the sign function to the result.

\paragraph{Voice Recognition (ISOLET).}
We use the The ISOLET dataset from the \texttt{sklearn} library, which consists of 617 acoustic features extracted from 26 spoken letters (letters A-Z). To handle continuous values, we quantize each feature into $L=21$ discrete levels. We compute the global minimum and maximum for each feature across the training set to define linear bin edges, mapping features to integer indices $v_{j} \in \{0, \dots, L-1\}$ for $j \in \{1, \dots, 617\}$.

We adopt the identification-value chain encoding (VoiceHD \citep{imani2017voicehd}.) This utilizes two sets of hypervectors: (1) \textit{Item Memory (iM)}, consisting of random orthogonal hypervectors $ID_j \in \{-1, +1\}^d$ representing the identity of each feature index $j$; and (2) \textit{Continuous Item Memory (CiM)}, consisting of correlated level hypervectors $L_v \in \{-1, +1\}^d$ for each quantization level. The level hypervectors are constructed such that the Hamming distance between $L_u$ and $L_v$ is proportional to $|u - v|$. The sample is encoded by binding the feature identity with its corresponding level value and bundling across all features:
$$H = \text{sign}\left( \sum_{j=1}^{617} ID_j \otimes L_{v_j} \right)$$
where $\otimes$ denotes element-wise multiplication (XOR in binary space). As with MNIST, we use the normalized cosine similarity discussed in \Cref{app:positive-similarity} and class prototypes are formed by aggregating the encoded training samples and binarizing via the sign function.

\paragraph{Language Classification (European Languages).}
We use the The European Languages dataset from the \texttt{torchhd} library, which consists of text samples from 21 European languages. The input text is preprocessed by converting to lowercase and removing excess whitespace, restricting the input to a maximum sequence length of 128 characters. The vocabulary consists of the 26 Latin alphabet characters plus a token for spaces.

We utilize an $N$-gram encoding scheme (with $N=3$) to capture local sequential context \citep{joshi2016language}. Each character $c$ in the vocabulary is assigned a static random hypervector $S_c$. For a text sequence converted to hypervectors $S_{t}, S_{t+1}, \dots$, a trigram at position $t$ is encoded by binding permuted inputs: $G_t = S_t \otimes \rho(S_{t+1}) \otimes \rho^2(S_{t+2})$, where $\rho$ is a cyclic shift (permutation) operator. The final text representation is the superposition of all valid trigrams in the sequence, binarized by the sign function:
$$H = \text{sign}\left( \sum_{t} S_t \otimes \rho(S_{t+1}) \otimes \rho^2(S_{t+2}) \right)$$
We use the normalized cosine similarity in \Cref{app:positive-similarity}. During training, we accumulate the hypervectors of all samples for a given language. Unlike the binary prototypes used for MNIST and ISOLET, the final language prototypes are normalized real-valued vectors (accumulated sums normalized by the $L_2$ norm) to facilitate precise cosine similarity calculations during inference.

\subsection{Prototyping and Encoding of the Neural Dataset}\label{app:neuron-encoding}
As described in \cref{sec:exp-neuron}, we first performed a sliding window procedure during the training period, which is selected 200-600 ms after the odor delivery signal. The sliding window size is 200 ms, and the step size is 50 ms, yielding 5 samples per trial. Then, we transformed the spike data during each sample window into time-binned firing rate estimates by slicing the recordings into adjacent bins and computing the total spike counts normalized by bin duration (bin size 25 ms, resulting in 8 bins per sample). 

We use Fourier Holographic Reduced Representations (FHRR), to encode input data into hypervectors as complex numbers in the unit circle \citep{377968}. An FHRR hypervector is of the form $H = e^{i \theta} \in \mathbb{C}^d$, where $\theta \in \mathbb{R}^{d}$ and $i = \sqrt{-1}$ is the imaginary number. 
We defined the similarity measure $\delta : \mathcal{H}\times \mathcal{H} \mapsto [0,1]$ as normalized cosine similarity for complex hypervectors:
\begin{align}\label{eq:sim-complex-cosine}
    \delta(h_1,h_2) =  \frac{1}{2} \paren*{\frac{\Re [h_1^T \bar{h}_2]}{d}+1},
\end{align}
where $\bar{h}$ is the complex conjugate of $h$ and $\Re[h]$ is the real part of $h$.

\begin{figure*}[!htb]
    \centering
    \includegraphics[width=\linewidth]{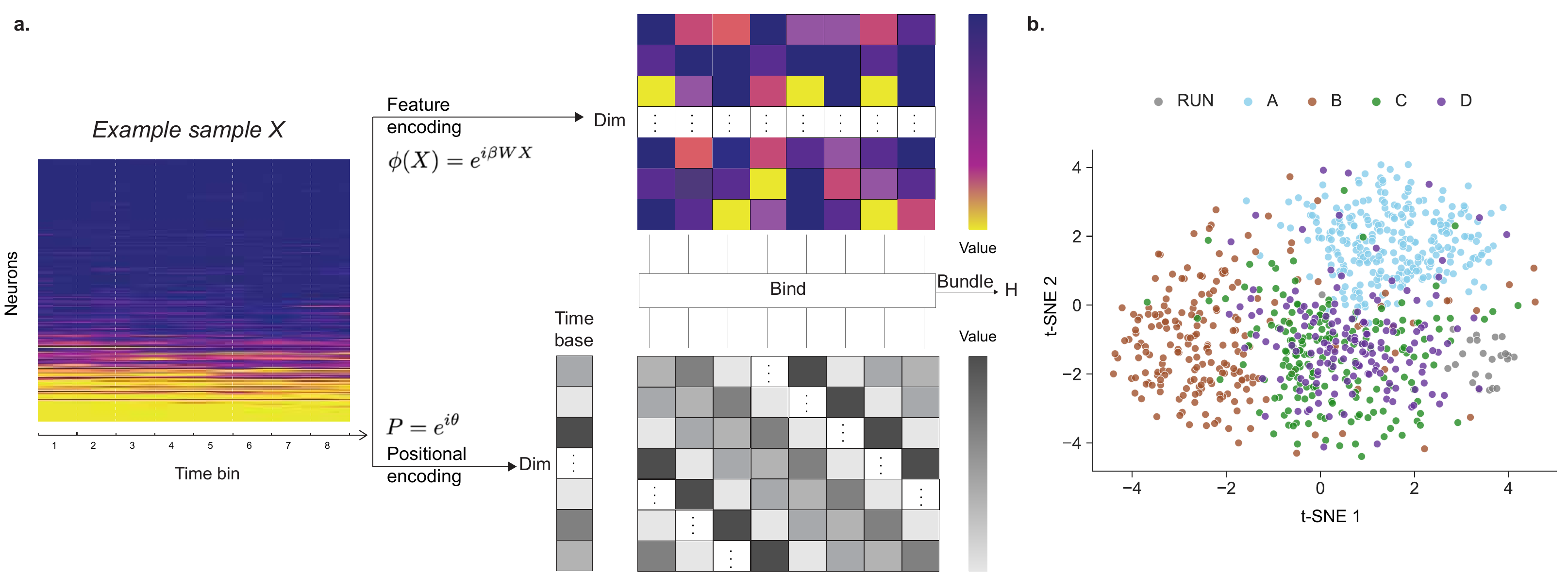}
    \caption{Encoding of the spike activities. (a) The full encoding pipeline using FHRR. (b) A T-SNE visualization of the encoded hypervectors for different states using data from rat Superchris.}
    \label{fig:rat-encoding}
\end{figure*}

The left panel in \Cref{fig:rat-encoding} visualizes the encoding steps. Each sample $X \in \mathbb{R}^{p\times t}$ represents neural spike activities, where $p$ is the number of features, i.e., number of neurons and $t$ is the number of time bins. For each time bin $j \in [t]$, we denote the neural activities as $X_{\cdot,j} \in \mathbb{R}^p$, namely, the $j$-th column of sample $X$. To transform this continuous column into hypervectors, we employ Fractional Power Encoding (FPE). FPE aligns with Random Fourier Features (RFF), enabling the model to capture nonlinear relationships within the feature space governed by the choice of kernel \citep{rahimi2007random}. We define the encoding function as $\phi: \mathbb{R}^p \rightarrow \mathbb{C}^d$, with 
\begin{align*}
    \phi(X_{\cdot,j}) = e^{i \beta W X_{\cdot,j}}, \text{ with projection matrix } W\in \mathbb{R}^{d\times p},
\end{align*}
here we adopt a Radial Basis Function (RBF) kernel by sampling each row $W_{k,\cdot} \sim \mathcal{N}(0, I_p)$ from a multivariate Gaussian distribution with an identity covariance matrix $I_p \in \mathbb{R}^{p \times p}$. Under this choice of kernel, the complex cosine similarity defined in \Cref{eq:sim-complex-cosine} approximates the Euclidean distance in the original feature space $\mathbb{R}^p$.
Here $\beta$ is a fractional power factor, that tunes how similar two hypervectors can be in the hyperplane. For all experiments in \Cref{sec:exp-neuron}, we use $\beta=0.3$.

To represent temporal information across all time bins $j \in [t]$, we first sample a positional (time) indicator hypervector $P = e^{i\theta}$, where each element of $\theta \in \mathbb{R}^d$ is drawn from a uniform distribution $\mathrm{Unif}(0, 2\pi)$. Temporal order is then encoded via cyclic permutations of $P$. For each time bin, we bind the feature encoding to its corresponding temporal indicator using element-wise multiplication, $\phi(X_{\cdot,j}) \otimes \rho^j(P)$, and aggregate these across the entire sequence through bundling:
\begin{align*}
H = \sum_{j=1}^t \phi(X_{\cdot,j}) \otimes \rho^j(P).
\end{align*}
During training, class prototypes are constructed by accumulating the resulting hypervectors for each respective class.


\subsection{Synthetic Data Generation}\label{app:synthetic-3class}
To evaluate the proposed ConformalHDC framework, we design a synthetic $K$-class classification problem ($K=3$) within feature space $\mathcal{X} \subseteq \mathbb{R}^p$ where $p=2$. The data geometry is specifically constructed to challenge the classifier's robustness under varying degrees of class overlap and heteroscedasticity. The in-distribution data are sampled from Gaussian blobs with centers arranged in an equilateral triangle:
\begin{align*}
    \mu_1 = [0, 0]^\top, \quad \mu_2 = [L, 0]^\top, \quad \mu_3 = [L/2, (\sqrt{3}/2)L]^\top,
\end{align*}
where $L=4\sqrt{p}$ defines the equidistant separation between class centers. For each class $y \in \{1, 2, 3\}$, we sample $n_y$ observations from an isotropic multivariate Gaussian distribution:
\begin{align*}
    X \mid Y=y \sim \mathcal{N}(\mu_y, \sigma_y^2 I),
\end{align*}
where $I$ denotes the identity matrix. To evaluate the ability of our nonconformity scores to adapt to classes with varying spatial densities, we introduce heteroscedasticity by fixing $\sigma_0 = 1.0$ and $\sigma_1 = 2.0$, while sweeping the standard deviation of the third class, $\sigma_3$, across the range $[3.0, 5.0]$ in increments of $0.25$. This setup provides an intuitive testbed: as $\sigma_3$ increases, class 3 becomes more dispersed and harder to distinguish, making valid uncertainty quantification more desirable.

Furthermore, we assess the framework's capability for anomaly detection and prediction deferral by generating an out-of-distribution (OOD) cluster. Let $\bar{\mu} = \frac{1}{K}\sum_{y=1}^K \mu_y$ denote the centroid of the in-distribution prototypes. The OOD cluster is sampled from $\mathcal{N}(\mu_{\text{ood}}, I)$, where the center $\mu_{\text{ood}}$ is moved away from the in-distribution centroid:
\begin{align*}
    \mu_{\text{ood}} = \bar{\mu} + 1.8L[0, -1]^\top.
\end{align*}


\section{Computational Complexity Analysis}\label{app:complexity}
This section evaluates the computational overhead of the ConformalHDC framework. In comparison to standard HDC pipelines, the cost introduced by conformalization is minimal. This is because the calibration set size  $n_{\text{cal}}$ is typically much smaller than the training set size $n_{\text{train}}$, and the evaluation of nonconformity scores leverages the existing prototype-based architecture, requiring only basic arithmetic operations.
 
To keep this analysis model-agnostic, we denote the base HDC training cost (encompassing encoding and prototyping) as $\mathcal{O}(T)$. While $T$ varies by implementation, it is generally a function of the training size $n_{\text{train}}$, the hypervector dimension $d$, and the raw feature dimension $p$. For example, the encoding and training process for the neural data described in \Cref{app:neuron-encoding} scales as $\mathcal{O}(n_{\text{train}}dp)$.

Table~\ref{tab:complexity} summarizes the costs for our proposed algorithms when processing $m$ test points. Across all variants, the asymptotic complexity remains $\mathcal{O}(T + n_{\text{cal}}Kd + mKd)$. This includes a one-time calibration cost of $\mathcal{O}(T + n_{\text{cal}}Kd)$ to compute calibration scores and empirical quantiles, followed by an inference cost of $\mathcal{O}(mKd)$ to generate prediction sets. Detailed breakdowns for each procedure can be found in \Cref{app:cost-marginal} to \Cref{app:cost-point}.

\begin{table}[h]
\centering
\caption{Computational complexity of ConformalHDC algorithms. $T$ is the training cost, $d$ the hypervector dimension, $K$ the number of classes, $n_{\text{cal}}$ the calibration size, and $m$ the number of test points.}
\label{tab:complexity}
\resizebox{\textwidth}{!}{
\begin{tabular}{@{}llll@{}}
\toprule
\textbf{Algorithm} & \textbf{Calibration (One-time)} & \textbf{Inference ($m$ points)} & \textbf{Total Cost} \\ \midrule
Set-valued marg. (\Cref{alg:set-conformalHDC-marginal}) & $\mathcal{O}(T + n_{\text{cal}}Kd)$ & $\mathcal{O}(mKd)$ & $\mathcal{O}(T + n_{\text{cal}}Kd + mKd)$ \\
Set-valued cond. (\Cref{alg:set-conformalHDC-cond}) & $\mathcal{O}(T + n_{\text{cal}}Kd)$ & $\mathcal{O}(mKd)$ & $\mathcal{O}(T + n_{\text{cal}}Kd + mKd)$ \\
Point-valued (\Cref{alg:point-conformalHDC}) & $\mathcal{O}(T + n_{\text{cal}}Kd)$ & $\mathcal{O}(mKd)$ & $\mathcal{O}(T + n_{\text{cal}}Kd + mKd)$ \\ \bottomrule
\end{tabular}
}
\end{table}

\subsection{Computational Cost Analysis of Algorithm~\ref{alg:set-conformalHDC-marginal}}\label{app:cost-marginal}
The cost of producing marginal prediction sets for $m$ test points is $\mathcal{O}(T + n_{\text{cal}}Kd + mKd)$, as shown below.
\begin{itemize} \setlength\itemsep{0em}
    \item Training the HDC model (encoding and prototype generation) on $\mathcal{D}_{\text{train}}$ has cost $\mathcal{O}(T)$.
    \item Computing the nonconformity scores $\widehat{S}_i$ for all $i \in \mathcal{D}_{\text{cal}}$ requires $n_{\text{cal}}$ similarity computations against $K$ prototypes, each of dimension $d$, resulting in $\mathcal{O}(n_{\text{cal}}Kd)$.
    \item Finding the $(1-\alpha)$ empirical quantile $\widehat{Q}_{1-\alpha}$ from the calibration scores has cost $\mathcal{O}(n_{\text{cal}})$ using a selection algorithm.
    \item For $m$ test points, we first encode each point and compute its similarity to $K$ prototypes ($\mathcal{O}(mKd)$). Given the similarities, evaluating the nonconformity score for each possible label $y \in [K]$ and comparing against $\widehat{Q}_{1-\alpha}$ in \eqref{eq:conformal-pset} takes $\mathcal{O}(mK)$, leading to $\mathcal{O}(mKd)$.
\end{itemize}
Therefore, the overall cost is $\mathcal{O}(T + n_{\text{cal}}Kd + mKd)$.

\subsection{Computational Cost Analysis of Algorithm~\ref{alg:set-conformalHDC-cond}}\label{app:cost-conditional}
The cost of producing label-conditional prediction sets for $m$ test points is $\mathcal{O}(T + n_{\text{cal}}Kd + mK^2d)$, as shown below.
\begin{itemize} \setlength\itemsep{0em}
    \item Training the HDC model (encoding and prototype generation) on $\mathcal{D}_{\text{train}}$ has cost $\mathcal{O}(T)$.
    \item Computing the nonconformity scores $\widehat{S}_i$ for all $i \in \mathcal{D}_{\text{cal}}$ requires $n_{\text{cal}}$ similarity computations against $K$ prototypes, each of dimension $d$, resulting in $\mathcal{O}(n_{\text{cal}}Kd)$.
    \item Computing $K$ separate quantiles $\widehat{Q}^y_{1-\alpha}$ involves partitioning the $n_{\text{cal}}$ scores and finding a quantile for each group, taking $\mathcal{O}(n_{\text{cal}}K)$.
    \item The inference cost for $m$ test points is $\mathcal{O}(mKd)$ because each class label $y \in [K]$ is checked against its label-specific threshold $\widehat{Q}^y_{1-\alpha}$ using the same scoring logic as the marginal case.
\end{itemize}
Therefore, the overall cost is $\mathcal{O}(T + n_{\text{cal}}Kd + mKd)$.

\subsection{Computational Cost Analysis of Algorithm~\ref{alg:point-conformalHDC}}\label{app:cost-point}
The cost of producing point-valued predictions for $m$ test points is $\mathcal{O}(T + n_{\text{cal}}Kd + mKd)$, as shown below.
\begin{itemize} \setlength\itemsep{0em}
    \item This algorithm serves as a post-processing refinement of the prediction sets produced by Algorithm~\ref{alg:set-conformalHDC-marginal} or \ref{alg:set-conformalHDC-cond}.
    \item For each test point, if the prediction set size $|\widehat{C}_{\alpha}| > 1$ or $|\widehat{C}_{\alpha}| = 0$, the cost of identifying the label with the minimum nonconformity score among the candidates is $\mathcal{O}(K)$.
    \item Therefore, trimming $m$ prediction sets takes $\mathcal{O}(mK)$.
\end{itemize}
Therefore, Algorithm~\ref{alg:point-conformalHDC} has a total time complexity of $\mathcal{O}(T + n_{\text{cal}}Kd + mKd)$, maintaining the same asymptotic efficiency as the set-valued procedures.

\section{Review of conformity scores for classification} \label{app:class-scores}

This section reviews the relevant background on the adaptive conformity scores for classification developed by \citet{romano2020classification}.
For any $x \in \mathcal{X}$ and $y \in [K]$, let $\widehat{\pi}_{y}(x)$ denote any (possibly very inaccurate) estimate of the true $\mathbb{P}[ Y = y \mid X =x]$ corresponding to the unknown data-generating distribution. Concretely, a typical choice of $\widehat{\pi}$ may be given by the output of the final softmax layer of a neural network classifier, for example.
 For any $x \in \mathcal{X}$ and $\tau \in [0,1]$, define the \emph{generalized conditional quantile} function $L$, with input $x, \widehat{\pi}, \tau$, as:
\begin{align} \label{eq:oracle-threshold}
  L(x; \widehat{\pi}, \tau) & = \min \{ k \in [K] \ : \ \widehat{\pi}_{(1)}(x) + \widehat{\pi}_{(2)}(x) + \ldots + \widehat{\pi}_{(k)}(x) \geq \tau \},
  \end{align}
where $\widehat{\pi}_{(1)}(x) \leq \widehat{\pi}_{(2)}(x) \leq \ldots \widehat{\pi}_{(K)}(x)$ are the order statistics of $\widehat{\pi}_{1}(x) \leq \widehat{\pi}_{2}(x) \leq \ldots \widehat{\pi}_{K}(x)$.
Intuitively, $L(x; \widehat{\pi}, \tau)$ gives the size of the smallest possible subset of labels whose cumulative probability mass according to $\widehat{\pi}$ is at least $\tau$.
Define also a function $\mathcal{S}$ with input $x$, $u \in (0,1)$, $\widehat{\pi}$, and $\tau$ that computes the set of most likely labels up to (but possibly excluding) the one identified by $L(x; \widehat{\pi}, \tau)$:
\begin{align} \label{eq:define-S}
    \mathcal{S}(x, u ; \widehat{\pi}, \tau) & =
    \begin{cases}
    \text{ `$y$' indices of the $L(x ; \widehat{\pi},\tau)-1$ largest $\widehat{\pi}_{y}(x)$},
    & \text{ if } u \leq V(x ; \widehat{\pi},\tau) , \\
    \text{ `$y$' indices of the $L(x ; \widehat{\pi},\tau)$ largest $\widehat{\pi}_{y}(x)$},
    & \text{ otherwise},
    \end{cases}
\end{align}
where
\begin{align*}
    V(x; \widehat{\pi}, \tau) & =  \frac{1}{\widehat{\pi}_{(L(x ; \widehat{\pi}, \tau))}(x)} \left[\sum_{k=1}^{L(x ; \widehat{\pi}, \tau)} \widehat{\pi}_{(k)}(x) - \tau \right].
\end{align*}
Then, define the \textit{generalized inverse quantile} conformity score function $s$, with input $x,y,u;\widehat{\pi}$, as:
\begin{align} \label{eq:define-scores}
    s(x,y,u;\widehat{\pi}) & = \min \left\{ \tau \in [0,1] : y \in \mathcal{S}(x, u ; \widehat{\pi}, \tau) \right\}.
\end{align}
Intuitively, $s(x,y,u;\widehat{\pi})$ is the smallest value of $\tau$ for which the set $\mathcal{S}(x, u ; \widehat{\pi}, \tau)$ contains the label $y$.
Finally, the conformity score for a data point $(X_i,Y_i)$ is given by:
\begin{align}
  \widehat{S}_i
  & = s(X_i,Y_i,U_i;\widehat{\pi}),
\end{align}
where $U_i$ is a uniform random variable independent of anything else. Note that this can also be equivalently written more explicitly as:
\begin{align}\label{eq:inverse_quant_conformity_score}
  \widehat{S}_i
  & = \widehat{\pi}_{(1)}(X_i) + \widehat{\pi}_{(2)}(X_i) + \ldots + \widehat{\pi}_{(r(Y_i,\widehat{\pi}(X_i)))}(X_i) - U_i\cdot \widehat{\pi}_{(r(Y_i,\widehat{\pi}(X_i)))}(X_i),
\end{align}
where $r(Y_i,\widehat{\pi}(X_i))$ is the rank of $Y_i$ among the possible labels $y \in [K]$ based on $\widehat{\pi}_y(X_i)$, so that $r(y,\widehat{\pi}(X_i))=1$ if $\widehat{\pi}_{y}(X_i) = \widehat{\pi}_{(1)}(X_i)$. One can convert the conformity score in \eqref{eq:inverse_quant_conformity_score} into nonconformity score by negation.

\section{Mathematical Proofs}\label{app:proofs}
\begin{proof}[Proof of \Cref{thm:marginal}]
    By construction of the prediction set $\widehat{C}_{\alpha}(X_{n+1})$ in \eqref{eq:conformal-pset}, we have
    \begin{align*}
        \mathbb{P}\cbrac{Y_{n+1} \in \widehat{C}_{\alpha}(X_{n+1})} =  \mathbb{P}\cbrac{\widehat{S}(X_{n+1}, Y_{n+1}) \leq \widehat{Q}_{1-\alpha}(\{\widehat{S}_i\}_{i \in \mathcal{D}_{\text{cal}}})}.
    \end{align*}
   The initial assumption of exchangeability for the sequence $\cbrac{(X_i, Y_i)}_{i=1}^{n+1}$ implies that the nonconformity scores derived from the calibration set and test point, $\cbrac{s_i}_{i \in \mathcal{D}_{\text{cal}} \cup \{n+1\}}$, are also exchangeable. The rest of the proof for both the upper and lower bounds then follows directly from the standard quantile inflation argument in Lemma 2 of~\citet{romano2019conformalized}. 
\end{proof}

\begin{proof}[Proof of \Cref{thm:conditional}]
    By construction of $\widehat{C}_{\alpha}(X_{n+1})$ in \eqref{eq:conditional-pset}, for any $y\in[K]$, we have
    \begin{align*}
        \mathbb{P}\cbrac{Y_{n+1} \in \widehat{C}_{\alpha}(X_{n+1}) \mid Y_{n+1}=y} =  \mathbb{P}\cbrac{\widehat{S}(X_{n+1}, Y_{n+1}) \leq \widehat{Q}^y_{1-\alpha}(\{\widehat{S}_i\}_{i \in \mathcal{D}^y_{\text{cal}}})\mid Y_{n+1}=y}.
    \end{align*}
    the results for both the upper and lower bounds is derived by applying the proof of \Cref{thm:marginal} to the subset $\mathcal{D}^y_{\text{cal}}$, since condition on $Y_{n+1}=y$, $(X_{n+1}, Y_{n+1})$ is exchangeable with $\mathcal{D}^y_{\text{cal}}$.
\end{proof}

\section{Additional Numerical Results}\label{app:additional-results}
In this section, we provide comprehensive numerical results and extended performance metrics for the experiments discussed in the main text. These tables include detailed breakdowns of marginal coverage, prediction set sizes, accuracy and AUC.

Specifically, \Cref{tab:3-class} presents the full results for the synthetic experiment introduced in \Cref{sec:synthetic-exp}. Following this, \Cref{tab:mnist} through \Cref{tab:isolet} provide the complete performance evaluations for the real-world benchmark datasets including MNIST, ISOLET, and Languages. \Cref{tab:rat} collects the results for all five subjects from the neural decoding application. For all tables, results are reported as mean values across multiple independent repetitions, with standard errors provided in parentheses.

\begin{table}[ht]
    \centering
    \footnotesize                       
    \renewcommand{\arraystretch}{1} 
    \setlength{\tabcolsep}{12pt}
    \begin{tabular}{|ll|cc|c|c|}
   \hline
 & & \multicolumn{2}{c|}{\textbf{Set-Valued}} & \multicolumn{1}{c|}{\textbf{Point}} & \multicolumn{1}{c|}{\textbf{OOD}} \\
\cline{3-6}
\textbf{Sigma} & \textbf{Method} & \textbf{Coverage} & \textbf{Size} & \textbf{Accuracy} & \textbf{AUC} \\
\hline
\multirow{6}{*}{3} & HDC & 0.861 (0.001) & 1.000 (0.000) & 0.861 (0.001) & - \\ 
   & Inv. quantile & 0.901 (0.001) & 1.448 (0.004) & 0.862 (0.001) & 0.391 (0.001) \\ 
   & Penalized & 0.899 (0.001) & 1.246 (0.004) & 0.876 (0.001) & 0.884 (0.001) \\ 
   & Similarity & 0.900 (0.001) & 1.444 (0.005) & 0.875 (0.001) & 0.997 (0.000) \\ 
   & CHDC-ratio & 0.900 (0.001) & 1.103 (0.002) & 0.875 (0.001) & 0.986 (0.000) \\ 
   & CHDC-discount & 0.900 (0.001) & 1.159 (0.003) & 0.877 (0.001) & 0.998 (0.000) \\ 
   \hline
\multirow{6}{*}{3.25} & HDC & 0.851 (0.001) & 1.000 (0.000) & 0.851 (0.001) & - \\ 
   & Inv. quantile & 0.901 (0.001) & 1.488 (0.004) & 0.851 (0.001) & 0.392 (0.001) \\ 
   & Penalized & 0.899 (0.001) & 1.303 (0.005) & 0.868 (0.001) & 0.883 (0.001) \\ 
   & Similarity & 0.901 (0.001) & 1.569 (0.006) & 0.865 (0.001) & 0.995 (0.000) \\ 
   & CHDC-ratio & 0.898 (0.001) & 1.131 (0.003) & 0.866 (0.001) & 0.986 (0.000) \\ 
   & CHDC-discount & 0.899 (0.001) & 1.201 (0.003) & 0.869 (0.001) & 0.997 (0.000) \\ 
   \hline
\multirow{6}{*}{3.5} & HDC & 0.846 (0.001) & 1.000 (0.000) & 0.846 (0.001) & - \\ 
   & Inv. quantile & 0.901 (0.001) & 1.511 (0.004) & 0.846 (0.001) & 0.393 (0.001) \\ 
   & Penalized & 0.902 (0.001) & 1.357 (0.005) & 0.863 (0.001) & 0.885 (0.001) \\ 
   & Similarity & 0.901 (0.001) & 1.721 (0.006) & 0.859 (0.002) & 0.991 (0.000) \\ 
   & CHDC-ratio & 0.901 (0.001) & 1.166 (0.003) & 0.862 (0.001) & 0.986 (0.000) \\ 
   & CHDC-discount & 0.901 (0.001) & 1.243 (0.003) & 0.866 (0.001) & 0.994 (0.000) \\ 
   \hline
\multirow{6}{*}{3.75} & HDC & 0.839 (0.002) & 1.000 (0.000) & 0.839 (0.002) & - \\ 
   & Inv. quantile & 0.902 (0.001) & 1.543 (0.004) & 0.838 (0.002) & 0.395 (0.001) \\ 
   & Penalized & 0.902 (0.002) & 1.397 (0.005) & 0.857 (0.002) & 0.885 (0.001) \\ 
   & Similarity & 0.899 (0.001) & 1.853 (0.007) & 0.850 (0.002) & 0.987 (0.000) \\ 
   & CHDC-ratio & 0.901 (0.002) & 1.195 (0.003) & 0.856 (0.002) & 0.985 (0.000) \\ 
   & CHDC-discount & 0.900 (0.002) & 1.282 (0.004) & 0.861 (0.002) & 0.991 (0.000) \\ 
   \hline
\multirow{6}{*}{4} & HDC & 0.830 (0.002) & 1.000 (0.000) & 0.830 (0.002) & - \\ 
   & Inv. quantile & 0.900 (0.001) & 1.563 (0.004) & 0.830 (0.002) & 0.395 (0.001) \\ 
   & Penalized & 0.900 (0.001) & 1.416 (0.005) & 0.854 (0.001) & 0.886 (0.001) \\ 
   & Similarity & 0.899 (0.001) & 2.010 (0.007) & 0.843 (0.002) & 0.982 (0.000) \\ 
   & CHDC-ratio & 0.900 (0.001) & 1.221 (0.003) & 0.852 (0.001) & 0.984 (0.000) \\ 
   & CHDC-discount & 0.900 (0.001) & 1.327 (0.003) & 0.856 (0.001) & 0.987 (0.000) \\ 
   \hline
\multirow{6}{*}{4.25} & HDC & 0.826 (0.001) & 1.000 (0.000) & 0.826 (0.001) & - \\ 
   & Inv. quantile & 0.901 (0.001) & 1.584 (0.004) & 0.824 (0.001) & 0.396 (0.001) \\ 
   & Penalized & 0.898 (0.001) & 1.437 (0.005) & 0.849 (0.001) & 0.885 (0.001) \\ 
   & Similarity & 0.900 (0.001) & 2.145 (0.007) & 0.843 (0.001) & 0.977 (0.001) \\ 
   & CHDC-ratio & 0.900 (0.001) & 1.247 (0.004) & 0.849 (0.001) & 0.983 (0.000) \\ 
   & CHDC-discount & 0.901 (0.001) & 1.372 (0.004) & 0.854 (0.001) & 0.983 (0.001) \\ 
   \hline
\multirow{6}{*}{4.5} & HDC & 0.821 (0.002) & 1.000 (0.000) & 0.821 (0.002) & - \\ 
   & Inv. quantile & 0.901 (0.001) & 1.607 (0.003) & 0.817 (0.002) & 0.397 (0.001) \\ 
   & Penalized & 0.900 (0.001) & 1.462 (0.004) & 0.844 (0.001) & 0.888 (0.001) \\ 
   & Similarity & 0.898 (0.001) & 2.268 (0.007) & 0.836 (0.002) & 0.970 (0.001) \\ 
   & CHDC-ratio & 0.901 (0.001) & 1.280 (0.003) & 0.843 (0.002) & 0.981 (0.000) \\ 
   & CHDC-discount & 0.900 (0.001) & 1.417 (0.004) & 0.849 (0.001) & 0.977 (0.001) \\ 
   \hline
\multirow{6}{*}{4.75} & HDC & 0.815 (0.001) & 1.000 (0.000) & 0.815 (0.001) & - \\ 
   & Inv. quantile & 0.899 (0.001) & 1.616 (0.004) & 0.812 (0.001) & 0.397 (0.001) \\ 
   & Penalized & 0.899 (0.001) & 1.456 (0.004) & 0.840 (0.001) & 0.892 (0.001) \\ 
   & Similarity & 0.899 (0.001) & 2.366 (0.005) & 0.835 (0.002) & 0.964 (0.001) \\ 
   & CHDC-ratio & 0.899 (0.001) & 1.295 (0.004) & 0.841 (0.001) & 0.980 (0.000) \\ 
   & CHDC-discount & 0.898 (0.001) & 1.461 (0.004) & 0.846 (0.001) & 0.972 (0.001) \\ 
   \hline
\end{tabular}

    \caption{Full results for synthetic experiment with alpha=$0.1$. In-distribution and out-of-distribution data are sampled according to the distributions described in \Cref{app:synthetic-3class}. For all conformal methods, the in-distribution data is partitioned into training ($40\%$), calibration ($50\%$), and testing ($10\%$) sets. For the standard HDC classifier, the training and calibration folds are combined for model training, as it does not require a separate calibration step. Results are averaged over 100 repetitions. Values in parentheses report the standard errors. }
    \label{tab:3-class}
\end{table}

\begin{table}[ht]
    \centering
    \small                            
    \renewcommand{\arraystretch}{1.03} 
    \setlength{\tabcolsep}{12pt}
    \begin{tabular}{|l|cc|c|c|}
   \hline
 & \multicolumn{2}{c|}{\textbf{Set-Valued}} & \multicolumn{1}{c|}{\textbf{Point}} & \multicolumn{1}{c|}{\textbf{OOD}} \\
\cline{2-5}
\textbf{Method} & \textbf{Coverage} & \textbf{Size} & \textbf{Accuracy} & \textbf{AUC} \\
\hline
HDC & 0.862 (0.001) & 1.000 (0.000) & 0.862 (0.001) & - \\ 
  Inv. quantile & 0.950 (0.001) & 2.002 (0.007) & 0.862 (0.001) & 0.483 (0.000) \\ 
  Penalized & 0.950 (0.001) & 4.218 (0.007) & 0.858 (0.001) & 0.655 (0.001) \\ 
  Similarity & 0.951 (0.001) & 3.370 (0.009) & 0.863 (0.001) & 0.745 (0.001) \\ 
  CHDC-ratio & 0.950 (0.001) & 1.770 (0.005) & 0.863 (0.001) & 0.873 (0.001) \\ 
  CHDC-discount & 0.950 (0.001) & 2.425 (0.008) & 0.864 (0.001) & 0.815 (0.001) \\ 
   \hline
\end{tabular}

    \caption{Full results for MNIST experiment with alpha=$0.05$. Among 10 handwritten digits, last four digits, i.e., $\cbrac{6,7,8,9}$ are treated as OOD classes. For all conformal methods, the in-distribution data is split into training ($80\%$), calibration
    ($15\%$), and testing ($5\%$) sets. Other details remain the same as in \Cref{tab:3-class}.}
    \label{tab:mnist}
\end{table}

\begin{table}[ht]
    \centering
    \small                            
    \renewcommand{\arraystretch}{1.03} 
    \setlength{\tabcolsep}{12pt}
    \begin{tabular}{|l|cc|c|c|}
   \hline
 & \multicolumn{2}{c|}{\textbf{Set-Valued}} & \multicolumn{1}{c|}{\textbf{Point}} & \multicolumn{1}{c|}{\textbf{OOD}} \\
\cline{2-5}
\textbf{Method} & \textbf{Coverage} & \textbf{Size} & \textbf{Accuracy} & \textbf{AUC} \\
\hline
HDC & 0.953 (0.0008) & 1.000 (0.0000) & 0.953 (0.0008) & - \\ 
  Inv. quantile & 0.990 (0.0004) & 2.514 (0.0140) & 0.953 (0.0008) & 0.472 (0.0001) \\ 
  Penalized & 0.990 (0.0003) & 16.392 (0.0203) & 0.947 (0.0008) & 0.294 (0.0019) \\ 
  Similarity & 0.990 (0.0003) & 9.712 (0.0436) & 0.952 (0.0008) & 0.960 (0.0005) \\ 
  CHDC-ratio & 0.990 (0.0004) & 3.291 (0.0113) & 0.953 (0.0008) & 0.982 (0.0003) \\ 
  CHDC-discount & 0.990 (0.0003) & 5.354 (0.0246) & 0.953 (0.0008) & 0.973 (0.0004) \\ 
   \hline
\end{tabular}

    \caption{Full results for Languages experiment with alpha=$0.01$. Among the 21 European languages, Indo-European languages are treated as in-distribution, while non-Indo-European languages, i.e., \{Finnish, Estonian, Hungarian\} are treated as OOD classes. For all conformal methods, the in-distribution data is partitioned into training ($75\%$), calibration ($22.5\%$), and testing ($2.5\%$) sets. Other details remain the same as in \Cref{tab:3-class}. }
    \label{tab:languages}
\end{table}

\begin{table}[ht]
    \centering
    \small                            
    \renewcommand{\arraystretch}{1.03} 
    \setlength{\tabcolsep}{12pt}
    \begin{tabular}{|l|cc|c|c|}
   \hline
 & \multicolumn{2}{c|}{\textbf{Set-Valued}} & \multicolumn{1}{c|}{\textbf{Point}} & \multicolumn{1}{c|}{\textbf{OOD}} \\
\cline{2-5}
\textbf{Method} & \textbf{Coverage} & \textbf{Size} & \textbf{Accuracy} & \textbf{AUC} \\
\hline
HDC & 0.889 (0.002) & 1.000 (0.000) & 0.889 (0.002) & - \\ 
  Inv. quantile & 0.980 (0.001) & 3.378 (0.027) & 0.883 (0.002) & 0.482 (0.000) \\ 
  Penalized & 0.980 (0.001) & 21.344 (0.025) & 0.872 (0.002) & 0.536 (0.002) \\ 
  Similarity & 0.980 (0.001) & 7.428 (0.052) & 0.883 (0.002) & 0.759 (0.002) \\ 
  CHDC-ratio & 0.979 (0.001) & 4.429 (0.021) & 0.882 (0.002) & 0.747 (0.001) \\ 
  CHDC-discount & 0.980 (0.001) & 5.258 (0.037) & 0.883 (0.002) & 0.778 (0.001) \\ 
   \hline
\end{tabular}

    \caption{Full results for ISOLET experiment with alpha=$0.02$. Among the 26 English alphabet letters, the last four letters, i.e., $\cbrac{W, X, Y, Z}$ are treated as OOD classes. For all conformal methods, the in-distribution data is split into training ($57\%$), calibration ($38\%$), and testing ($5\%$) sets. Other details remain the same as in \Cref{tab:3-class}.}
    \label{tab:isolet}
\end{table}

\begin{table}[ht]
    \centering
    \small                            
    \renewcommand{\arraystretch}{1.15} 
    \setlength{\tabcolsep}{12pt}
    \begin{tabular}{|ll|cc|c|c|}
   \hline
 & & \multicolumn{2}{c|}{\textbf{Set-Valued}} & \multicolumn{1}{c|}{\textbf{Point}} & \multicolumn{1}{c|}{\textbf{OOD}} \\
\cline{3-6}
\textbf{Rat} & \textbf{Method} & \textbf{Coverage} & \textbf{Size} & \textbf{Accuracy} & \textbf{AUC} \\
\hline
\multirow{7}{*}{Barat} & HDC (train) & 0.472 (0.004) & 1.000 (0.000) & 0.472 (0.004) & - \\ 
   & HDC & 0.492 (0.004) & 1.000 (0.000) & 0.492 (0.004) & - \\ 
   & Inv. quantile & 0.794 (0.004) & 2.403 (0.008) & 0.472 (0.004) & 0.605 (0.000) \\ 
   & Penalized & 0.794 (0.004) & 3.003 (0.016) & 0.460 (0.004) & 0.011 (0.001) \\ 
   & Similarity & 0.803 (0.004) & 3.091 (0.017) & 0.458 (0.004) & 0.979 (0.001) \\ 
   & CHDC-ratio & 0.797 (0.004) & 2.261 (0.009) & 0.467 (0.004) & 0.139 (0.002) \\ 
   & CHDC-discount & 0.802 (0.004) & 2.947 (0.017) & 0.463 (0.004) & 0.967 (0.001) \\ 
   \hline
\multirow{7}{*}{Buchanan} & HDC (train) & 0.625 (0.004) & 1.000 (0.000) & 0.625 (0.004) & - \\ 
   & HDC & 0.640 (0.004) & 1.000 (0.000) & 0.640 (0.004) & - \\ 
   & Inv. quantile & 0.799 (0.003) & 1.782 (0.005) & 0.616 (0.004) & 0.599 (0.000) \\ 
   & Penalized & 0.796 (0.003) & 2.901 (0.014) & 0.613 (0.004) & 0.016 (0.001) \\ 
   & Similarity & 0.792 (0.004) & 2.979 (0.015) & 0.585 (0.004) & 0.986 (0.001) \\ 
   & CHDC-ratio & 0.796 (0.003) & 1.643 (0.007) & 0.615 (0.003) & 0.600 (0.004) \\ 
   & CHDC-discount & 0.791 (0.004) & 2.761 (0.016) & 0.596 (0.004) & 0.986 (0.001) \\ 
   \hline
\multirow{7}{*}{Mitt} & HDC (train) & 0.616 (0.003) & 1.000 (0.000) & 0.616 (0.003) & - \\ 
   & HDC & 0.641 (0.003) & 1.000 (0.000) & 0.641 (0.003) & - \\ 
   & Inv. quantile & 0.805 (0.003) & 1.877 (0.005) & 0.616 (0.003) & 0.616 (0.000) \\ 
   & Penalized & 0.794 (0.003) & 3.041 (0.013) & 0.589 (0.003) & 0.066 (0.001) \\ 
   & Similarity & 0.802 (0.003) & 3.113 (0.014) & 0.588 (0.003) & 0.942 (0.001) \\ 
   & CHDC-ratio & 0.798 (0.003) & 1.743 (0.006) & 0.616 (0.003) & 0.636 (0.002) \\ 
   & CHDC-discount & 0.802 (0.003) & 2.996 (0.014) & 0.599 (0.003) & 0.945 (0.001) \\ 
   \hline
\multirow{7}{*}{Stella} & HDC (train) & 0.586 (0.004) & 1.000 (0.000) & 0.586 (0.004) & - \\ 
   & HDC & 0.607 (0.004) & 1.000 (0.000) & 0.607 (0.004) & - \\ 
   & Inv. quantile & 0.809 (0.003) & 2.034 (0.006) & 0.567 (0.004) & 0.608 (0.000) \\ 
   & Penalized & 0.806 (0.003) & 2.914 (0.015) & 0.556 (0.004) & 0.292 (0.003) \\ 
   & Similarity & 0.797 (0.003) & 2.778 (0.015) & 0.581 (0.004) & 0.823 (0.002) \\ 
   & CHDC-ratio & 0.802 (0.003) & 1.707 (0.010) & 0.582 (0.004) & 0.892 (0.002) \\ 
   & CHDC-discount & 0.798 (0.004) & 2.320 (0.014) & 0.580 (0.004) & 0.862 (0.002) \\ 
   \hline
\multirow{7}{*}{Superchris} & HDC (train) & 0.733 (0.003) & 1.000 (0.000) & 0.733 (0.003) & - \\ 
   & HDC & 0.754 (0.003) & 1.000 (0.000) & 0.754 (0.003) & - \\ 
   & Inv. quantile & 0.803 (0.003) & 1.282 (0.006) & 0.730 (0.003) & 0.601 (0.000) \\ 
   & Penalized & 0.797 (0.004) & 2.698 (0.016) & 0.713 (0.003) & 0.120 (0.002) \\ 
   & Similarity & 0.799 (0.004) & 2.683 (0.016) & 0.704 (0.004) & 0.918 (0.002) \\ 
   & CHDC-ratio & 0.795 (0.003) & 1.179 (0.006) & 0.720 (0.003) & 0.539 (0.004) \\ 
   & CHDC-discount & 0.801 (0.004) & 2.188 (0.015) & 0.720 (0.003) & 0.926 (0.002) \\ 
   \hline
\end{tabular}

    \caption{Full results for odor decoding with alpha=$0.2$. Results aggregated over 500 independent repetitions. For all conformal methods, the odor data is split into training ($50\%$), calibration ($40\%$), and testing ($10\%$) sets. The standard HDC classifier utilizes the combined training and calibration folds, whereas HDC (train) uses the training fold only. Point-valued ConformalHDC achieves performance comparable to HDC (train), though it performs slightly lower than the full HDC model. This is due to data splitting constraints on the relatively small neural dataset ($\approx 600$ samples per subject).}
    \label{tab:rat}
\end{table}

\end{document}